%% file: KDD.tex
\documentclass[sigconf]{acmart}
\AtBeginDocument{%
  }

\copyrightyear{2026}
\acmYear{2026}
\setcopyright{cc}
\setcctype{by}
\acmConference[KDD 2026] {Proceedings of the 32nd ACM SIGKDD Conference on Knowledge Discovery and Data Mining V.2}{August 9--13, 2026}{Jeju Island, Republic of Korea.}
\acmBooktitle{Proceedings of the 32nd ACM SIGKDD Conference on Knowledge Discovery and Data Mining V.2 (KDD 2026), August 9--13, 2026, Jeju Island, Republic of Korea}
\acmISBN{979-8-4007-2259-2/2026/08}
\acmDOI{10.1145/3770855.3817797}

\usepackage{multirow} 
\usepackage{subfigure}
\usepackage{algorithm}
\usepackage{algorithmic}
\theoremstyle{plain}
\newtheorem{theorem}{Theorem}[section]

\newtheorem{corollary}[theorem]{Corollary}
\theoremstyle{definition}

\theoremstyle{remark}

\begin{document}

\title{Advancing Graph Few-Shot Learning via In-Context Learning}

\author{Renchu Guan}
\authornote{Key Laboratory of Symbolic Computation and Knowledge Engineering of the Ministry of Education}
\affiliation{
  \institution{College of Computer Science and Technology, Jilin University}
  \city{Changchun}
  \country{China}
}
\email{guanrenchu@jlu.edu.cn}

\author{Yajun Wang}
\authornotemark[1]
\affiliation{
  \institution{College of Computer Science and Technology, Jilin University}
  \city{Changchun}
  \country{China}
}
\email{yajun24@mails.jlu.edu.cn}

\author{Chunli Guo}
\authornotemark[1]
\affiliation{
  \institution{College of Software, Jilin University}
  \city{Changchun}
  \country{China}
}
\email{guocl24@mails.jlu.edu.cn}

\author{Bowen Cao}
\affiliation{
  \institution{Department of Computer Science and Technology, Yanbian University}
  \city{Yanbian}
  \country{China}
}
\email{caobowen@ybu.edu.cn}

\author{Fausto Giunchiglia}
\affiliation{%
  \institution{Department of Information Engineering and Computer Science, University of Trento}
  \city{Trento}
  \country{Italy}
}
\email{fausto.giunchiglia@unitn.it}

\author{Wei Pang}
\affiliation{%
  \institution{School of Mathematical and Computer Sciences, Heriot-Watt University}
  \city{Edinburgh}
  \country{Scotland}
}
\email{w.pang@hw.ac.uk}

\author{Yonghao Liu}
\authornotemark[1]
\authornote{Corresponding author}
\affiliation{
  \institution{College of Computer Science and Technology, Jilin University}
  \city{Changchun}
  \country{China}
}
\email{yonghao20@mails.jlu.edu.cn}

\author{Xiaoyue Feng}
\authornotemark[1]
\authornotemark[2]
\affiliation{%
  \institution{College of Computer Science and Technology, Jilin University}
  \city{Changchun}
  \country{China}
}
\email{fengxy@jlu.edu.cn}

\renewcommand{\shortauthors}{Renchu Guan et al.}
\begin{abstract}
Graph few-shot learning, which aims to classify nodes from novel classes with only a few labeled examples, is a widely studied problem in graph learning.
However, existing methods often face two key limitations. First, the predominant graph few-shot learning paradigm relies on supervised tasks, failing to leverage the vast number of unlabeled nodes in the graph.
Second, many approaches require complex task adaptation or fine-tuning during inference, limiting their efficiency and applicability.
Inspired by the powerful in-context learning capabilities of large language models, we propose a novel model named \textbf{VISION} for ad\textbf{V}anc\textbf{I}ng graph few-\textbf{S}hot learning via \textbf{I}n-c\textbf{O}ntext Lear\textbf{N}ing to address these challenges.
Our model reframes graph few-shot learning as a fine-tuning-free sequence reasoning problem.
At its core is a context-aware network that initializes nodes with role embeddings and employs a dual-context fusion module to synergistically integrate local topological structures and global task-level dependencies.
This allows our model to dynamically generate class-aware representations for the query set conditioned on the support set context in a single forward pass.
To effectively train our model, we introduce an unsupervised task generator that creates structure-adaptive features and constructs diverse pseudo-tasks from abundant unlabeled data.
Our method unifies unsupervised meta-learning with graph in-context learning, achieving efficient inference.
Extensive experiments on multiple benchmark datasets demonstrate the superiority of our model. Our public code can be found \textcolor{red}{\href{https://github.com/KEAML-JLU/VISION}{here}}.
\end{abstract}

\begin{CCSXML}
<ccs2012>
   <concept>
       <concept_id>10002951.10003227.10003351</concept_id>
       <concept_desc>Information systems~Data mining</concept_desc>
       <concept_significance>500</concept_significance>
       </concept>
   <concept>
       <concept_id>10010147.10010257.10010293.10010294</concept_id>
       <concept_desc>Computing methodologies~Neural networks</concept_desc>
       <concept_significance>500</concept_significance>
       </concept>
 </ccs2012>
\end{CCSXML}

\ccsdesc[500]{Information systems~Data mining}
\ccsdesc[500]{Computing methodologies~Neural networks}
\keywords{Graph Neural Network, In-Context Learning, Node Classification}

\maketitle

\input{introduction}
\input{related_work}
\input{preliminary}
\input{method}
\input{theory}
\input{experiment}
\input{result}

\input{conclusion}

\begin{acks}
Our work is supported by the National Natural Science Foundation of China (No. 62372209). Fausto Giunchiglia's work is funded by European Union's Horizon 2020 FET Proactive Project (No.823783).
\end{acks}

\bibliographystyle{ACM-Reference-Format}
\balance
\bibliography{sample-base}

\appendix
\renewcommand{\thetable}{A.\arabic{table}}
\captionsetup[table]{labelformat=simple, labelsep=colon, name=Table}
\setcounter{table}{0}

\input{appendix}

\end{document}

%% file: introduction.tex
\section{Introduction}
\label{sec:introduction}
Graphs, as a fundamental data structure, are extensively utilized to represent a wide range of applications \cite{wu2020comprehensive, zhou2020graph, liu2025high}.
In recent years, graph neural networks (GNNs) have emerged as the \textit{de facto} standard for learning on graph-structured data due to their powerful representation capabilities.
However, the success of GNNs heavily relies on a large number of labeled nodes for training, a requirement that is often costly or infeasible in many real-world scenarios \cite{zhang2022few}, thus limiting their applicability.
Consequently, graph few-shot learning (FSL), which aims to enable rapid adaptation to novel classes using only a handful of labeled examples by leveraging knowledge from previously encountered tasks \cite{HuangZ20, liu2026graph}, has attracted significant research attention.
Despite notable progress, existing Graph FSL methods generally face two core challenges.

\textit{First}, most existing graph FSL models predominantly follow the episodic meta-learning paradigm, whose central idea is to train models by simulating few-shot tasks \cite{finn2017model}.
In this setting, each task is divided into a support set and a query set, where the model learns discriminative patterns from the support set and performs prediction on the query set \cite{snell2017prototypical}.
However, this paradigm does not explicitly exploit the contextual information shared across the support and query sets---such as topological relationships and class similarities---which can be highly beneficial for improving model performance \cite{wang2022graph}.
Notably, under this paradigm, models still require \textit{fine-tuning} with a few labeled data during inference, which inevitably introduces the risk of overfitting and hampers efficiency.

\textit{Second}, current graph FSL models typically rely on supervised meta-training.
This implies that these models are trained on a base set of classes with abundant labels, from which numerous supervised tasks are constructed to learn generalizable meta-knowledge.
Such approaches, however, present a fundamental contradiction: while aiming to solve data scarcity, they presuppose data richness during the training phase.
In practice, such an assumption is rarely attainable, as annotation is costly and often requires domain-specific expertise \cite{zhang2022few, tan2022transductive}.
For instance, in protein–protein interaction graphs, predicting compound properties typically relies on costly wet-lab experiments \cite{zhou2024proaffinity}.
More critically, this paradigm fails to effectively leverage the vast number of unlabeled nodes during meta-training.
The construction of learning tasks is confined to the labeled base classes, meaning the rich information within the sea of unlabeled data is not actively mined to diversify the learning experience, leaving its potential largely untapped.
This highlights the necessity of a training strategy that can bypass the need for base class labels and fully leverage the unlabeled data across the entire graph to guide model learning.

To address the aforementioned challenges, we propose a novel framework, \textbf{VISION}, ad\textbf{V}anc\textbf{I}ng graph few-\textbf{S}hot learning via \textbf{I}n-c\textbf{O}ntext Lear\textbf{N}ing.
Specifically, inspired by the powerful in-context learning capabilities inherent in the Transformer architecture of large language models \cite{singh2023transient, chan2022data}, we explore the transfer of this ability to the graph FSL setting.
To this end, we introduce a new context-aware paradigm that reformulates graph FSL as a fine-tuning-free sequence inference problem.
Our framework builds upon a context-aware network that explicitly models the task structure via role embeddings.
Instead of processing topological and task information sequentially, we design a dual-context fusion module that synergistically integrates local neighborhood structures and global task-level dependencies.
This enables the model to condition on the support set and dynamically generate class-aware representations for query nodes, thereby achieving a single pass forward without fine-tuning.

Moreover, to alleviate the reliance on labeled data during meta-training, we adopt an unsupervised strategy equipped with a pseudo-task generator.
This generator creates structure-adaptive features by dynamically gating node attributes with their neighborhood context, and subsequently constructs diverse pseudo-tasks---mirroring the structure of real tasks---based on similarity relations within this adaptive feature space.
Theoretically, our context-aware paradigm enjoys a tighter generalization error bound compared to conventional episodic meta-learning approaches.
Empirically, our model achieves superior performance across multiple benchmark datasets and demonstrates clear advantages over other competitive methods.

Our main contributions are summarized as follows:

\noindent (I) We propose \textbf{VISION}, a novel graph FSL framework based on a context-aware paradigm. It explicitly captures task-specific contextual information through a dual-context fusion mechanism, enabling effective in-context reasoning.

\noindent (II) We design an unsupervised pseudo-task generator that leverages structure-adaptive features to construct diverse tasks from unlabeled data, eliminating the dependency on labeled data during the meta-training stage.

\noindent (III) We conduct extensive experiments on multiple benchmark datasets, demonstrating the superiority of our proposed model against various baselines.

%% file: related_work.tex
\section{Related Work}
\subsection{Graph Neural Networks}
Graph Neural Networks (GNNs) have become a cornerstone in the analysis of graph-structured data, providing powerful support for graph representation learning \cite{kipf2016semi,zhou2020graph}. 
Building on the graph spectral theory, a series of graph convolutional networks subsequently emerged \cite{defferrard2016convolutional, henaff2015deep, kipf2016semi}, demonstrating strong learning performance through the design of diverse convolutional layers. 
In addition to spectral approaches, GNNs based on neighborhood aggregation mechanisms have also been widely explored. 
For instance, GraphSAGE \cite{hamilton2017inductive} generates node representations by sampling and aggregating neighbor features. Graph attention networks (GATs) \cite{velickovic2017graph} introduce trainable attention weights to enable fine-grained weighting of neighbors during aggregation. And graph isomorphism networks (GINs) \cite{xu2018powerful} extend this idea by applying arbitrary aggregation functions on multisets. 
Nevertheless, most existing GNN models primarily focus on supervised node classification and exhibit significant limitations in scenarios with extremely limited samples. 


\subsection{Graph Few-Shot Learning}
Recent studies have extended the scope of FSL to graph-structured data, giving rise to the field of graph FSL \cite{zhou2019meta, ding2020graph, liu2022few, liu2024meta, liu2024simple}. Conventional FSL methods, which are mostly developed for images or text, tend to perform unsatisfactorily when directly applied to graphs. To tackle this issue, GNNs, which are tailored for graph-structured information have been integrated with meta-learning frameworks. For instance, Meta-GNN \cite{zhou2019meta} incorporates MAML to adapt GNN models with very limited labeled data, while G-Meta \cite{huang2020graph} learns node representations through local subgraphs and enhances generalization via meta-learning. Other representative approaches include 
TENT \cite{wang2022task}, which constructs task-specific graph structures and adapts GNN parameters guided by class prototypes, and Meta-GPS \cite{liu2022few} adopts the scaling and shifting operations for the parameters to quickly adapt to the novel tasks. However, these methods largely adhere to the meta-learning paradigm while overlooking the contextual information within each task. 

\subsection{In-Context Learning}
In-context learning refers to the capability of completing novel tasks purely through inference, by conditioning on a small number of input–output demonstrations and generalizing to unseen queries \cite{dong2024survey}.
While this phenomenon is most prominently associated with large language models \cite{devlin2019bert,radford2019language,touvron2023llama}, recent studies have shown its applicability in a variety of domains, including image in-painting \cite{bar2022visual}, image segmentation \cite{butoi2023universeg}, and especially meta-learning \cite{chan2022data,singh2023transient}.
The GPICL \cite{kirsch2022general} framework further demonstrates that meta-training can enable transformers to function as versatile in-context learners.
Building on this line of work, CAML \cite{DBLP:conf/iclr/FiftyDJALRT24} adapts the in-context learning paradigm to address non-causal sequence modeling challenges.
Recently, a few pioneering studies have attempted to bridge the gap between in-context learning and graph learning. 
For instance, PRODIGY \cite{huang2023prodigy} introduces a pre-training objective based on neighbor matching to enable in-context learning on graphs. 
GraphPrompt \cite{liu2023graphprompt} employs prompt tuning mechanisms to unify pre-training and downstream tasks. 
However, these methods typically rely on extensive pre-training on large-scale datasets or complex prompt optimization stages to align objectives.
Consequently, exploring a lightweight and fine-tuning-free paradigm to adapt in-context learning to graph domains remains an open challenge.

%% file: preliminary.tex
\section{Preliminary Study}
In this work, we focus on the problem of few-shot node classification on a graph $\mathcal{G}=\{\mathcal{V}, \mathcal{E}, \mathbf{X}, \mathbf{A}\}$, where $\mathcal{V}$ and $\mathcal{E}$ denote the sets of nodes and edges, while $\mathbf{X}$ and $\mathbf{A}$ represent the feature and adjacency matrices, respectively.
The label space is partitioned into disjoint base classes $\mathcal{Y}_\text{base}$ and novel classes $\mathcal{Y}_\text{nov}$.
Following the standard episodic meta-learning paradigm, the model is trained and evaluated on a series of $N$-way $K$-shot tasks $\mathcal{T}=\{\mathcal{S}, \mathcal{Q}\}$, each comprising a support set $\mathcal{S}$ containing $K$ labeled examples for each of the $N$ classes, and a query set $\mathcal{Q}$ of instances to be classified.
Crucially, unlike conventional supervised approaches that rely on ground-truth labels from $\mathcal{Y}_\text{base}$, we adopt a strictly unsupervised meta-training strategy where the model learns transferable knowledge from self-supervised ``pseudo-tasks'' constructed solely from unlabeled data.
The ultimate goal is to enable the model to accurately predict the labels of the query set $\mathcal{Q}_\text{test}$ in a novel task by reasoning over the context provided by $\mathcal{S}_\text{test}$, without requiring any gradient-based fine-tuning.

%% file: method.tex
\section{Method}
\label{sec:method}
In this section, we provide a detailed description of our proposed VISION.
Our model is built upon two foundational pillars: an unsupervised meta-training strategy that enables learning from unlabeled data via a pseudo-task generator, and a novel context-aware network for in-context reasoning.
The former constructs diverse pseudo-tasks from the entire graph, driving the model to learn a generalizable classification ability without relying on base labeled data.
The latter reframes few-shot node classification as a sequence reasoning task, utilizing a dual-context fusion module to synergistically integrate structural and task contexts within a unified architecture to dynamically infer query labels.
We illustrate the overall framework of VISION in Fig.~\ref{fig:framework}.

\begin{figure*}[ht]
    \centering
    \includegraphics[width=0.96\linewidth]{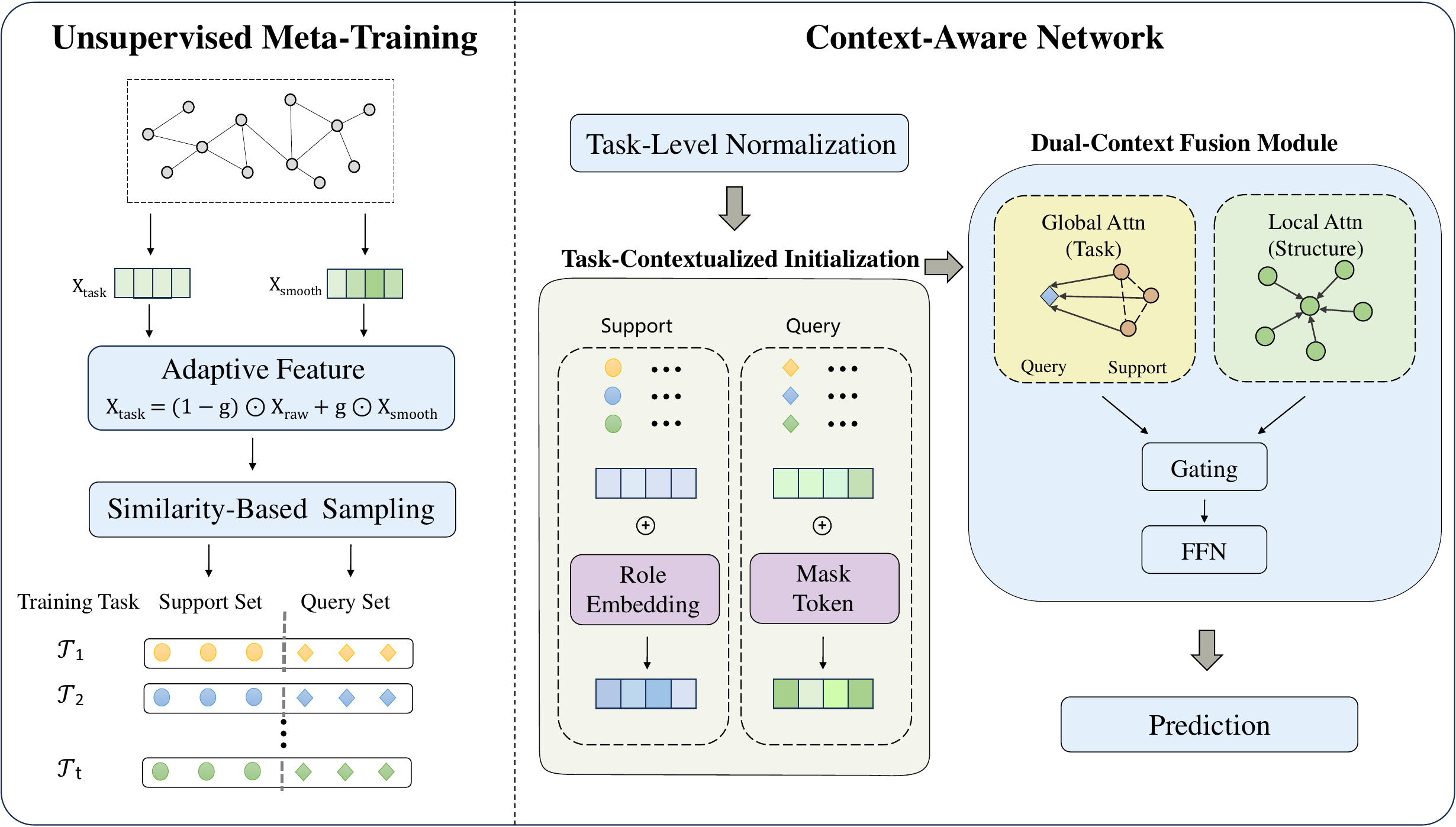}
    \caption{
        The overall framework of VISION.
        \textbf{Left:} The unsupervised meta-training module generates adaptive features $\mathbf{X}_{\text{task}}$ by fusing raw and smoothed features via an \textbf{adaptive gating mechanism}, then samples pseudo-tasks to simulate few-shot scenarios.
        \textbf{Right:} The context-aware network for inference. 
        It starts with the task-level normalization to mitigate distribution shifts, and then initializes input tokens via the task-contextualized initialization, which injects role embeddings (for support set) or mask tokens (for query set).
        The core dual-context fusion module operates in parallel: it simultaneously captures topological dependencies via local attention (structure) and task-specific relationships via global attention (task).
        These two complementary contexts are dynamically fused via gating to generate effective node representations.
        Finally, predictions are generated through a multi-head readout layer for ensemble prediction.
    }
    \label{fig:framework}
\end{figure*}

\subsection{Unsupervised Meta-Training}
\label{sec:unsupervised}
To overcome the dependency on base labeled data and fully leverage the rich unlabeled data in graphs, we propose an unsupervised meta-training strategy.
The core of this strategy is to first generate features for all nodes that adapt to the local topology, and then construct training tasks using a pseudo-task generator based on these more robust features.

\subsubsection{Structure-Adaptive Feature Generation}
To provide a more reliable node representation for the subsequent similarity-based sampling, we design an adaptive feature generation module.
The core idea is to dynamically balance a node's self-features with contextual information from its neighborhood.
We first define two fundamental types of features: the node's original features, $\mathbf{X}_{\text{raw}}$, representing its inherent attributes, and the smoothed features, $\mathbf{X}_{\text{smooth}}$, obtained via a single simplifying graph convolution (SGC) \cite{wu2019simplifying} step, which represents the average context of the node's one-hop neighborhood.
This is shown below: 
\begin{equation}
    \mathbf{X}_{\text{smooth}} = \tilde{\mathbf{D}}^{-\frac{1}{2}}\tilde{\mathbf{A}}\tilde{\mathbf{D}}^{-\frac{1}{2}}\mathbf{X}_{\text{raw}},
\end{equation}
where $\tilde{\mathbf{A}}$ is the adjacency matrix with self-loops and $\tilde{\mathbf{D}}_{ii}=\sum_j\tilde{\mathbf{A}}_{ij}$ is the corresponding diagonal degree matrix.
Notably, using only $\mathbf{X}_{\text{raw}}$ ignores structural information, while relying solely on $\mathbf{X}_{\text{smooth}}$ erases a node's distinctiveness.
Therefore, we introduce a gating mechanism to learn an optimal fusion ratio for each node.
This gate value, $\mathbf{g} \in \mathbb{R}^{N \times 1}$, is derived from local signal consistency. We measure this consistency for each node $i$ by calculating the cosine similarity score, $s_i$, between its raw feature vector $\mathbf{x}_{\text{raw}}^{(i)}$ and its smoothed feature vector $\mathbf{x}_{\text{smooth}}^{(i)}$, where both are L2-normalized. This score is then scaled to the range $[0, 1]$ to form the gate value $\mathbf{g}_i$:
\begin{equation}
\begin{aligned}
    & s_i = \text{cosine}(\mathbf{x}_{\text{raw}}^{(i)}, \mathbf{x}_{\text{smooth}}^{(i)}), \\
    & \mathbf{g}_i = (\text{clamp}(s_i, -1, 1) + 1) / 2,
\end{aligned}
\end{equation}
where the $\text{clamp}(s, a, b)$ function constrains the value $s$ within the inclusive range $[a, b]$.

From a graph signal processing perspective, this gating mechanism acts as a node-wise adaptive filter that explicitly handles the varying levels of homophily across the graph.
A high gate value identifies nodes in homophilous regions where neighborhood smoothing effectively denoises features.
Conversely, a low value protects heterophilous nodes or those on class boundaries from the over-smoothing phenomenon, preserving their distinctive high-frequency signals.
Finally, we use this gate to perform a weighted fusion of the two feature types, generating the final adaptive features $\mathbf{X}_{\text{task}}$:
\begin{equation}
    \mathbf{X}_{\text{task}} = (1 - \mathbf{g}) \odot \mathbf{X}_{\text{raw}} + \mathbf{g} \odot \mathbf{X}_{\text{smooth}},
\end{equation}
where $\odot$ denotes the Hadamard product (\textit{i.e.}, element-wise multiplication).
In this way, the resulting $\mathbf{X}_{\text{task}}$ is not a static representation but rather the result of a dynamic and fine-grained balance between a node's individuality and its neighborhood context.
This principled fusion provides a more robust feature foundation for the subsequent pseudo-task generation.

\subsubsection{Pseudo-Task Generation}
With the adaptive features, we can construct a multitude of pseudo-tasks that are structurally identical to real tasks.
Let $\mathcal{V}$ be the set of all nodes in the graph.
The process is as follows:

\noindent (1) \textbf{Anchor Selection}: $N$ nodes are randomly selected from the entire node set $\mathcal{V}$ to serve as ``anchors,'' which act as the ``class centers'' for the pseudo-task.
The rationale is that randomly sampled nodes are highly unlikely to belong to the same class, as demonstrated in \cite{siavash2019unsupervised}.
We mathematically and empirically validate this in \textbf{Appendix}~\ref{sec:anchor_verification}, showing high class distinctness across datasets.
Furthermore, we posit that occasional class collisions introduce a degree of label noise that increases task difficulty, thereby potentially enhancing the model's robustness against real-world ambiguity.
    
\noindent (2) \textbf{Nearest Neighbor Sampling}: For each anchor, we first sample a large candidate pool of nodes, $\mathcal{V}_{cand} \subset \mathcal{V}$, where $|\mathcal{V}_{cand}| = S_{pool}$. 
Subsequently, to ensure class discriminability, we employ a sequential exclusion sampling strategy to identify the $(K+M)$ most similar nodes to the current anchor from $\mathcal{V}_{cand}$.
Specifically, nodes selected for previous anchors are masked out to prevent class overlap.
    This similarity is measured by the cosine similarity in the adaptive feature space $\mathbf{X}_{\text{task}}$.
    
\noindent (3) \textbf{Task Construction}: These $N$ disjoint groups of sampled nodes together form an $N$-way $(K+M)$-shot pseudo-task.
We designate the first $K$ nodes in each group as the pseudo-support set $\mathcal{S}_{\text{pseudo}}$ and the remaining $M$ nodes as the pseudo-query set $\mathcal{Q}_{\text{pseudo}}$.

By training on a multitude of such pseudo-tasks, the model learns the general ability to solve the abstract task of ``classification'' rather than memorizing the features of specific classes.

\subsection{Context-Aware Network}
The context-aware network is designed to generate task-aware node representations by synergistically integrating structural and task contexts.
As illustrated in Fig. ~\ref{fig:framework}, the inference process begins with the task-level normalization.
Given the adaptive features $\mathbf{X}_{\text{task}}$ from the meta-training stage (or raw features during meta-testing), let $\mathcal{V}_{\mathcal{T}}$ denote the set of all nodes involved in the current task (comprising both support and query sets).
We first compute the task-level mean vector $\boldsymbol{\mu}_{\mathcal{T}}$ to capture the global distribution of the current episode:
\begin{equation}
    \boldsymbol{\mu}_{\mathcal{T}} = \frac{1}{|\mathcal{V}_{\mathcal{T}}|} \sum_{v \in \mathcal{V}_{\mathcal{T}}} \mathbf{x}_v,
\end{equation}
where $\mathbf{x}_v$ corresponds to the feature vector of node $v$ from $\mathbf{X}_{\text{task}}$.
Subsequently, we perform centering to obtain the normalized feature $\hat{\mathbf{x}}_v$ for each node $v \in \mathcal{V}_{\mathcal{T}}$:
\begin{equation}
    \hat{\mathbf{x}}_v = \mathbf{x}_v - \boldsymbol{\mu}_{\mathcal{T}}.
\end{equation}
This operation effectively mitigates the distribution shift between meta-training and meta-testing phases by aligning the feature distributions of different tasks to a common center.

Subsequently, we employ the task contextualized initialization to initialize the node representations.
Specifically, for a node $v$, let $\mathbf{x}_v \in \hat{\mathbf{X}}$ denote its normalized feature vector.
We project $\mathbf{x}_v$ into a hidden space and inject a learnable role embedding to denote its role within the task.
Crucially, to ensure generalization across tasks with disjoint label sets, we re-index the original labels of the support set to a local relative range $\{0, \dots, N-1\}$ within each episode.
Thus, for nodes in the support set with relative label $y_v$, we add a corresponding role embedding $\mathbf{e}_{y_v}$;
for query nodes (whose labels are masked), we add a shared mask token $\mathbf{e}_{\text{mask}}$.
It is worth noting that this design strictly preserves the \textit{permutation invariance} of the graph data.
Since the role embedding $\mathbf{e}_{y_v}$ is assigned solely based on the node's relative class identity (\textit{i.e.}, $y_v$) rather than its positional index in the input sequence, permuting the order of nodes in the support set does not alter their representations.
This ensures that the model focuses on capturing class-level commonalities rather than memorizing sequence patterns.
The initial input token $\mathbf{h}_v^{(0)}$ is defined as follows:
\begin{equation}
\label{eq:input_embedding}
    \mathbf{h}_v^{(0)} = \text{LayerNorm}(\mathbf{W}_f \hat{\mathbf{x}}_v + \mathbf{e}_{\text{role}}),
\end{equation}
where $\mathbf{e}_{\text{role}} \in \{\mathbf{e}_{0}, \dots, \mathbf{e}_{N-1}, \mathbf{e}_{\text{mask}}\}$.
This role-enhanced initialization allows the model to explicitly distinguish between support and query roles from the onset, independent of the specific semantic classes.
Building on these initialized tokens, the network employs two key components: the dual-context fusion module for parallel reasoning and the multi-head readout for robust prediction.

\subsubsection{Dual-Context Fusion Module}
\label{sec:dual_context}
The core of our architecture is the dual-context fusion module, which operates in a decoupled, parallel manner to capture complementary contexts.
Unlike previous methods that process information sequentially, this module simultaneously computes two distinct attention views:

\noindent \textbf{Local Attn (Structure).}
To capture the topological dependencies, we employ a structure-aware attention mechanism.
For a central node $v$, we sample a fixed-size subset of neighbors $\mathcal{N}_{\text{s}}(v)$ with a maximum size of $k_{\text{neigh}}$.
Node $v$ serves as the query, attending to its neighborhood sequence to aggregate local structural evidence.
Formally, the local representation $\mathbf{z}_{\text{struct},v}$ is computed as follows:
\begin{equation}
\label{eq:local_view}
    \mathbf{z}_{\text{struct},v} = \text{Attention}(\mathbf{h}_v \mathbf{W}_Q^L, \{\mathbf{h}_u \mathbf{W}_{key}^L\}_{u \in \mathcal{N}_{\text{s}}(v)}, \{\mathbf{h}_u \mathbf{W}_V^L\}_{u \in \mathcal{N}_{\text{s}}(v)}).
\end{equation}

\noindent \textbf{Global Attn (Task).}
Simultaneously, to capture the task-specific relationships, each node needs to understand its affinity with the support set $\mathcal{S}$.
We employ a global cross-attention mechanism where node $v$ acts as the query and the entire support set $\mathcal{S}$ acts as the key and value.
This allows query nodes to directly fine-tuning-free reason about their similarity to the labeled examples, generating the task-level representation $\mathbf{z}_{\text{task},v}$:
\begin{equation}
\label{eq:global_view}
    \mathbf{z}_{\text{task},v} = \text{Attention}(\mathbf{h}_v \mathbf{W}_Q^G, \{\mathbf{h}_k \mathbf{W}_{key}^G\}_{k \in \mathcal{S}}, \{\mathbf{h}_k \mathbf{W}_V^G\}_{k \in \mathcal{S}}).
\end{equation}

To dynamically resolve the potential tension between task-agnostic topology and task-specific semantics, we introduce a conflict-aware gating mechanism.
This mechanism allows the model to selectively suppress structural noise when the local neighborhood is task-irrelevant, or conversely, to rely on topological evidence when the few-shot task signals are ambiguous.
We calculate a scalar gate $\alpha_v \in [0, 1]$ based on the concatenation of both representations:
\begin{equation}
\label{eq:gating}
\begin{aligned}
    \alpha_v &= \sigma(\mathbf{W}_g [\mathbf{z}_{\text{struct},v} || \mathbf{z}_{\text{task},v}]), \\
    \mathbf{h}'_v &= (1 - \alpha_v) \odot \mathbf{z}_{\text{struct},v} + \alpha_v \odot \mathbf{z}_{\text{task},v}.
\end{aligned}
\end{equation}
The fused representation $\mathbf{h}'_v$ is then processed by a Feed-Forward Network (FFN) to generate the task-aware node embedding $\mathbf{Z}_{\text{final}}$.

\subsubsection{Multi-Head Readout}
\label{sec:readout}
Finally, to mitigate the variance inherent in few-shot scenarios and improve robustness, we employ a multi-head readout strategy.
Instead of relying on a single metric space, we project the final embeddings into $H$ diverse subspaces.
In each subspace $h$, we compute the scaled cosine similarity between the query node representation $\mathbf{z}_q^h$ and the support prototype $\mathbf{p}_k^h$ for class $k$. 
Let $\mathbf s_{q,k}^h$ denote the specific logit for head $h$:
\begin{equation}
    \mathbf s_{q,k}^h = \tau_h \cdot \text{cosine}(\mathbf{z}_q^h, \mathbf{p}_k^h),
\end{equation}
where $\tau_h$ is a learnable temperature parameter for the $h$-th head.

The final prediction score $\mathbf S_{q,k}$ is then obtained by averaging the logits across all $H$ heads:
\begin{equation}
\label{eq:readout}
    \mathbf S_{q,k} = \frac{1}{H} \sum_{h=1}^{H} \mathbf s_{q,k}^h.
\end{equation}


\subsection{Model Optimization}
\label{sec:optimization}
During the meta-training phase, we adopt a joint optimization strategy that combines a classification loss with an auxiliary contrastive loss to learn robust task-aware representations.
The total objective function is defined as:
\begin{equation}
    \mathcal{L}_{\text{total}} = \mathcal{L}_{\text{CE}} + \lambda \cdot \mathcal{L}_{\text{Con}},
\end{equation}
where $\lambda$ is a hyperparameter balancing the two terms.

\noindent \textbf{Classification Loss.}
The primary objective is to minimize the prediction error on the pseudo-query sets.
We utilize the final ensemble scores $\mathbf S_{q,k}$ obtained from Eq.~\ref{eq:readout}.
To prevent overfitting and encourage better generalization, we employ the cross-entropy loss with label smoothing:
\begin{equation}
    \mathcal{L}_{\text{CE}} = -\sum_{(\mathbf{z}_q, y_q) \in \mathcal{Q}_{\text{pseudo}}} \sum_{k=1}^{N} y_{q,k}^{\text{ls}} \log \left( \frac{\exp(\mathbf S_{q,k})}{\sum_{j=1}^{N} \exp(\mathbf S_{q,j})} \right),
\end{equation}
where $y_{q,k}^{\text{ls}}$ is the smoothed label for class $k$.

\noindent \textbf{Contrastive Loss.}
To further explicitly align the query samples with their corresponding support prototypes in the feature space, we introduce an auxiliary contrastive loss.
For each head $h$, we maximize the similarity between query node embeddings $\mathbf{z}_q^h$ and the support embeddings $\mathbf{z}_s^h$ of the same class, while minimizing it for different classes:
\begin{equation}
    \mathcal{L}_{\text{Con}} = -\frac{1}{H |\mathcal{Q}|} \sum_{h=1}^{H} \sum_{\mathbf{z}_q \in \mathcal{Q}} \log \frac{\sum_{\mathbf{z}_s \in \mathcal{S}_{y_q}} \exp(\mathbf{z}_q^h \cdot \mathbf{z}_s^h / \tau)}{\sum_{\mathbf{z}_{s'} \in \mathcal{S}} \exp(\mathbf{z}_q^h \cdot \mathbf{z}_{s'}^h / \tau)}.
\end{equation}

\noindent \textbf{Meta-Testing Inference.}
During the meta-testing phase, VISION is entirely fine-tuning-free.
For a novel task, we perform a single forward pass.
Thanks to the integrated multi-head readout, our model directly outputs the ensemble similarity scores (logits) for each query node.
The predicted label $\hat{y}_q$ is simply the index with the maximum score:
\begin{equation}
\label{eq:inference}
    \hat{y}_q = \underset{k}{\arg\max} \ \mathbf S_{q, k}.
\end{equation}

Our training algorithm is summarized in Algorithm~\ref{alg:vision_overall}. 

\begin{algorithm}[ht]
\caption{The Overall Procedure of VISION.}
\label{alg:vision_overall}
\begin{algorithmic}[1]
\REQUIRE A graph $\mathcal{G}=\{\mathcal{V}, \mathcal{E}, \mathbf{X}, \mathbf{A}\}$.
\ENSURE The trained VISION model.
\STATE // \textit{Phase 1: Unsupervised Meta-Training}
\STATE Generate structure-adaptive features $\mathbf{X}_{\text{task}}$ by fusing raw and smoothed features.
\COMMENT{Pre-computation}
\WHILE{not converged}
    \STATE Construct a pseudo-task $\mathcal{T}_{\text{pseudo}}=\{\mathcal{S}_{\text{pseudo}}, \mathcal{Q}_{\text{pseudo}}\}$ using adaptive features.
    \STATE \textbf{Task-Level Normalization:} Compute task mean and center features for all nodes in $\mathcal{T}_{\text{pseudo}}$.
    \STATE \textbf{Task-Contextualized Initialization:} Initialize $\mathbf{H}^{(0)}$ with role injection (Eq.~\ref{eq:input_embedding}).
    \STATE \textbf{Dual-Context Fusion:} Compute local structure $\mathbf{Z}_{\text{struct}}$ and global task $\mathbf{Z}_{\text{task}}$ contexts in parallel, then fuse them via gating (Eq.~\ref{eq:gating}).
    \STATE \textbf{Multi-Head Readout:} Generate ensemble logits $\mathbf{S}_q$ (Eq.~\ref{eq:readout}).
    \STATE Compute joint loss $\mathcal{L}_{\text{total}} = \mathcal{L}_{\text{CE}} + \lambda \mathcal{L}_{\text{Con}}$.
    \STATE Optimize model parameters by minimizing $\mathcal{L}_{\text{total}}$.
\ENDWHILE
\STATE // \textit{Phase 2: Fine-tuning-free Meta-Testing}
\STATE \textbf{Input:} A novel meta-test task $\mathcal{T}_{\text{test}}=\{\mathcal{S}_{\text{test}}, \mathcal{Q}_{\text{test}}\}$.
\STATE Perform a single forward pass (Steps 5-8 above) on $\mathcal{T}_{\text{test}}$.
\STATE Predict labels $\hat{y}_q = \arg\max \mathbf{S}_q$ for query nodes.
\STATE \textbf{Output:} Predicted labels $\hat{Y}_{\mathcal{Q}}$.
\end{algorithmic}
\end{algorithm}

%% file: theory.tex
\section{Theoretical Analysis}
\label{sec:theory}

In this section, we provide a theoretical analysis of why VISION, trained on unsupervised pseudo-tasks, can generalize to real-world few-shot tasks without fine-tuning. We analyze the generalization error $\epsilon_{gen} = \mathbb{E}_{\mathcal{T} \sim P_{real}}[\mathcal{L}(f, \mathcal{T})]$ within a domain adaptation framework, considering the pseudo-task distribution as the \textit{source domain} and the real task distribution as the \textit{target domain}.

\begin{theorem}[Generalization Bound via Domain Alignment]
\label{thm:main}
Let $\mathcal{H}$ be the hypothesis space of the model $f$. Assume the loss function $\mathcal{L}$ is bounded and $L$-Lipschitz continuous with respect to the input features.
Let $\hat{f}$ be the model minimizing the empirical risk on $T$ sampled pseudo-tasks.
With probability at least $1-\eta$, the generalization error on real tasks is bounded by:
\begin{equation}
\label{eq:main_bound}
\epsilon_{gen}(\hat{f}) \leq \underbrace{\hat{\epsilon}_{pseudo}(\hat{f}) + \mathcal{R}_T(\mathcal{H})}_{\text{(A) Meta-Learning Error}} + \underbrace{L \cdot W_1(P_{pseudo}^{\phi}, P_{real}^{\phi})}_{\text{(B) Distribution Alignment Error}},
\end{equation}
where $\mathcal{R}_T(\mathcal{H})$ is the Rademacher complexity (model capacity term), $W_1(\cdot, \cdot)$ is the Wasserstein-1 distance between the feature distributions of pseudo and real tasks \textit{after} Task-Level Normalization ($\phi$), and $\lambda$ is the optimal joint error (assumed small).
\end{theorem}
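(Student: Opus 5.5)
The plan is to split the gap between real and empirical pseudo risk at the population pseudo risk:
\[
\epsilon_{gen}(\hat f)=\bigl(\epsilon_{real}(\hat f)-\epsilon_{pseudo}(\hat f)\bigr)+\bigl(\epsilon_{pseudo}(\hat f)-\hat\epsilon_{pseudo}(\hat f)\bigr)+\hat\epsilon_{pseudo}(\hat f),
\]
where $\epsilon_{pseudo}(f)=\mathbb{E}_{\mathcal{T}\sim P_{pseudo}}[\mathcal{L}(f,\mathcal{T})]$. Each difference is then bounded separately. The first difference gives term (B) by a transport argument. The second gives term (A) by a standard uniform-convergence argument over the $T$ sampled pseudo-tasks.

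For the second difference, I would treat the $T$ pseudo-tasks as i.i.d.\ draws from $P_{pseudo}$. Each task is generated by independent anchor and candidate sampling, so I will state this independence as an assumption. Since $\mathcal{L}$ is bounded, say in $[0,B]$, McDiarmid's inequality applies to $\sup_{f\in\mathcal{H}}(\epsilon_{pseudo}(f)-\hat\epsilon_{pseudo}(f))$. Combined with the usual symmetrization step, this gives, with probability at least $1-\eta$,
\[
\sup_{f\in\mathcal{H}}\bigl(\epsilon_{pseudo}(f)-\hat\epsilon_{pseudo}(f)\bigr)\le 2\mathfrak{R}_T(\mathcal{L}\circ\mathcal{H})+B\sqrt{\log(1/\eta)/(2T)}.
\]
I will define $\mathcal{R}_T(\mathcal{H})$ in the statement to absorb the factor $2$ and the confidence term; the statement leaves this implicit. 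Because the bound is uniform over $\mathcal{H}$, it holds in particular for the data-dependent minimizer $\hat f$.

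For the first difference, I would write $\mathcal{L}(f,\mathcal{T})=\ell_f(\phi(\mathcal{T}))$. Here $\phi$ is the Task-Level Normalization map, which centers the task's features. This factorization holds because the network only ever sees $\hat{\mathbf{x}}_v$: the initialization in Eq.~\ref{eq:input_embedding} takes the centered features as input. The task-level loss then depends on the task only through its normalized features (role embeddings and neighborhood sampling aside, which I will fix or fold into the input representation). The Lipschitz assumption says $\ell_f$ is $L$-Lipschitz in these normalized inputs. Kantorovich--Rubinstein duality then gives
\[
|\epsilon_{real}(f)-\epsilon_{pseudo}(f)|=\Bigl|\mathbb{E}_{P^{\phi}_{real}}\ell_f-\mathbb{E}_{P^{\phi}_{pseudo}}\ell_f\Bigr|\le L\cdot W_1(P^{\phi}_{pseudo},P^{\phi}_{real}),
\]
where $P^{\phi}$ denotes the pushforward of a task distribution under $\phi$. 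This holds for every $f$, so in particular for $\hat f$.

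There is a complication with labels. The real and pseudo tasks have different label mechanisms: the pseudo labels come from nearest-neighbour grouping, while the real labels are ground truth. The loss is therefore a function of the (features, relative labels) pair, not of features alone. To handle this, I would follow the Ben-David style and let $W_1$ be taken over the joint normalized input. If one insists on feature marginals only, a labelling-mismatch term appears. This is exactly where the optimal joint error $\lambda=\min_{f}(\epsilon_{real}(f)+\epsilon_{pseudo}(f))$ mentioned in the statement enters. I would bound the mismatch through an ideal $f^*$ attaining $\lambda$ and the triangle inequality, add $\lambda$ to the bound, and note that it is assumed negligible; this is how the displayed inequality omits it.

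I expect this step to be the main obstacle: justifying that a single Lipschitz transport bound on normalized feature distributions controls the loss, given the graph-structured inputs and the label-dependent role embeddings. I would try first to define a task as the tuple of normalized node features together with relative labels, under a product metric, and state Lipschitzness with respect to that metric. Summing the two bounds then yields Eq.~\eqref{eq:main_bound}, with probability at least $1-\eta$.
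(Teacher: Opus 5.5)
Your proposal is correct and follows the paper's proof essentially step for step: the same split through the population pseudo-risk $\epsilon_{pseudo}(\hat f)$, a standard Rademacher/McDiarmid bound for term (A), and Kantorovich--Rubinstein duality applied to the $L$-Lipschitz loss on the TLN-normalized task distributions for term (B). Your extra care fills in points the paper's proof leaves implicit without changing the argument: absorbing the factor $2$ and the $\sqrt{\ln(1/\eta)/(2T)}$ term into $\mathcal{R}_T(\mathcal{H})$, using uniform convergence to cover the data-dependent $\hat f$, and taking $W_1$ over joint (feature, relative-label) inputs so that $\lambda$ does not appear.
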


\noindent \textbf{Remark on Term (B):} The core challenge in our unsupervised setting is the domain gap (Term B).
Existing methods often assume this gap is negligible, which is a strong assumption.
In VISION, we explicitly minimize this term via two mechanisms:
(1) \textit{Structure-Adaptive Sampling} ensures the conditional probability $P(Y|G_{struct})$ in pseudo-tasks mimics the homophily property of real networks.
(2) \textit{Task-Level Normalization}, implemented in our code, aligns the marginal feature distributions by centering task-specific statistics, theoretically bounding $W_1(P_{pseudo}^{\phi}, P_{real}^{\phi})$.

\begin{corollary}[Benefit of In-Context Reasoning]
\label{coro:icl}
Consider the predictive variance of a query node representation $\mathbf{z}_q$. Let $\text{Var}_{local}$ be the variance of the representation derived solely from local neighbors (baseline), and $\text{Var}_{VISION}$ be the variance after the global task attention module processes the support set $\mathcal{S}$. Assuming the support set provides independent observations of the class prototype with noise variance $\sigma^2$, we have:
\begin{equation}
\text{Var}_{VISION}(\mathbf{z}_q) \approx \text{Var}_{local}(\mathbf{z}_q) - \frac{\gamma}{K} \sigma^2 < \text{Var}_{local}(\mathbf{z}_q),
\end{equation}
where $K$ is the shot number and $\gamma$ is the attention gain factor.
\end{corollary}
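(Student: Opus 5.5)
The plan is to model the global task attention of Eq.~\ref{eq:global_view} as a linear combination of the local representation and a weighted average of support embeddings, and then compare variances directly. The statement is approximate, so the goal is a first-order, mean-field style argument.

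\textbf{Step 1 (the model).} Write the query representation before the task branch as
\[
\mathbf{z}_q^{local} = \boldsymbol{\mu}_c + \boldsymbol{\xi}_q ,
\]
where $c$ is the query's class and $\boldsymbol{\xi}_q$ has variance $\text{Var}_{local}(\mathbf{z}_q)$.

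For the support nodes $k \in \mathcal{S}_c$, write $\mathbf{h}_k = \boldsymbol{\mu}_c + \boldsymbol{\epsilon}_k$. The noises $\boldsymbol{\epsilon}_k$ are i.i.d., have variance $\sigma^2$, and are independent of $\boldsymbol{\xi}_q$, as the corollary assumes.

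Assume the attention weights concentrate on the correct class. Specifically, the softmax weights $a_{qk}$ sum to approximately $1$ over $\mathcal{S}_c$ and are roughly uniform there, with $a_{qk}\approx 1/K$. Then $\mathbf{z}_{\text{task},q}\approx \mathbf{W}_V^G(\boldsymbol{\mu}_c + \bar{\boldsymbol{\epsilon}})$, where $\bar{\boldsymbol{\epsilon}} = \frac{1}{K}\sum_{k\in\mathcal{S}_c}\boldsymbol{\epsilon}_k$ has variance $\sigma^2/K$.

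\textbf{Step 2 (the fused representation).} By Eq.~\ref{eq:gating}, $\mathbf{h}'_q = (1-\alpha_q)\mathbf{z}_{\text{struct},q} + \alpha_q \mathbf{z}_{\text{task},q}$. Treat $\alpha_q$ as fixed to first order, i.e., ignore its fluctuation. Independence of the two noise sources then gives
\[
\text{Var}_{VISION} \approx (1-\alpha)^2\,\text{Var}_{local} + \alpha^2 \|\mathbf{W}_V^G\|^2\,\frac{\sigma^2}{K}.
\]

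\textbf{Step 3 (rewriting in the stated form).} Subtract $\text{Var}_{local}$ from the expression in Step 2. The difference is
\[
-\big(1-(1-\alpha)^2\big)\text{Var}_{local} + \alpha^2\|\mathbf{W}_V^G\|^2\sigma^2/K .
\]
Define the attention gain factor $\gamma$ by setting this difference equal to $-\gamma\sigma^2/K$, so that
\[
\gamma = \frac{K\big(1-(1-\alpha)^2\big)\text{Var}_{local}}{\sigma^2} - \alpha^2\|\mathbf{W}_V^G\|^2 .
\]
Then $\gamma>0$ exactly when aggregation over $K$ shots is less noisy than the local view, namely when $\alpha^2\|\mathbf{W}_V^G\|^2\sigma^2/K < (2\alpha-\alpha^2)\text{Var}_{local}$. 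This holds for any $\alpha\in(0,1]$ once $K$ is large enough, or once $\sigma^2$ is small enough relative to $\text{Var}_{local}$. Under that condition the strict inequality follows.

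An alternative route is to interpret the cross-attention as a shrinkage or Bayesian posterior update toward the prototype, with prior variance $\text{Var}_{local}$ and $K$ observations of variance $\sigma^2$. Linearizing the posterior variance in $1/K$ then gives the same form.

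\textbf{Main obstacle.} The hardest step is justifying the idealizations in Step 1. The softmax weights depend on the noisy $\mathbf{h}_k$ and on $\mathbf{h}_q$ itself, and the gate $\alpha_q$ is data-dependent. Both introduce correlations that the clean variance sum ignores.

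To handle this, I would take the low-temperature or concentrated-attention limit and bound the cross-class attention mass by some $\delta$. I would then do a first-order Taylor (delta-method) expansion of the softmax and the gate around the noiseless prototypes. Their fluctuations would be absorbed into $\gamma$ and into the ``$\approx$'' as $O(\delta + \sigma^3)$ remainder terms.
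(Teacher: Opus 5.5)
Your proposal follows essentially the same route as the paper's proof. Uniform attention over the $K$ same-class support embeddings gives a $\sigma^2/K$ term, the gate mixes it with the local representation under an independence assumption, and your explicit expression $(1-\alpha)^2\text{Var}_{local}+\alpha^2\|\mathbf{W}_V^G\|^2\sigma^2/K$ makes precise what the paper only asserts, namely that the fused variance is ``dominated by the term with the lower variance.''

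The caveats you surface are weaknesses of the statement, which the paper's proof also glosses over, and not of your argument. First, strict reduction needs a side condition such as $\alpha\|\mathbf{W}_V^G\|^2\sigma^2/K<(2-\alpha)\text{Var}_{local}$. Second, your $\gamma$ contains a term linear in $K$, so the $\gamma\sigma^2/K$ form is a relabeling rather than a derived $1/K$ rate.

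The one thing to adjust is your rigorization plan. A true low-temperature limit collapses the within-class softmax onto a single support node and loses the $1/K$ averaging. The regime you want has the temperature large relative to within-class score fluctuations but small relative to between-class score gaps.
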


Corollary \ref{coro:icl} justifies our design: the global context module acts as a Bayesian posterior refinement step. By conditioning on $K$ support examples in a single forward pass, it explicitly reduces the epistemic uncertainty of the query prediction, a benefit that local-only GNNs cannot achieve.

The detailed proofs for Theorem \ref{thm:main} and Corollary \ref{coro:icl} are provided in \textbf{Appendix} \ref{sec:appendix_proofs}.

%% file: experiment.tex
\section{Experiments}
\subsection{Datasets}
To evaluate the effectiveness of the proposed model, we conduct experiments on several benchmark datasets commonly used for few-shot node classification, including \textbf{CoraFull} \cite{bojchevski2017deep}, \textbf{Cora} \cite{yang2016revisiting}, \textbf{Coauthor-CS} \cite{shchur2018pitfalls}, \textbf{CiteSeer} \cite{yang2016revisiting}, \textbf{ogbn-arxiv} \cite{hu2020open},  and \textbf{Cora-ML} \cite{bojchevski2017deep}. The statistics of these datasets are presented in Table \ref{datasets}. We present descriptions of these datasets in \textbf{Appendix} \ref{Dataset_Descriptions}.

\begin{table}[ht]
\centering
\caption{Statistics of the datasets used in our experiments.}
\begin{tabular}{lcccc}
\toprule
\textbf{Dataset} & \textbf{\# Nodes} & \textbf{\# Edges} & \textbf{\# Features} & \textbf{\# Labels} \\
\midrule
CoraFull       & 19,793   & 65,311      & 8,710  & 70 \\
Cora           & 2,708    & 5,278       & 1,433  & 7  \\
Coauthor-CS    & 18,333   & 81,894      & 6,805  & 15 \\
CiteSeer       & 3,327    & 4,552       & 3,703  & 6  \\
ogbn-arxiv     & 169,343  & 1,166,243   & 128    & 40 \\
Cora-ML             & 2,995    & 16,316      & 2,879  & 7  \\
\bottomrule
\end{tabular}
\label{datasets}
\end{table}

\subsection{Baselines}
To demonstrate the superiority of our proposed model, we conduct comparisons with baselines from four representative categories. \textit{Graph embedding methods} contain \textbf{DeepWalk} \cite{perozzi2014deepwalk}, \textbf{node2vec} \cite{grover2016node2vec}, \textbf{GCN} \cite{kipf2016semi}, and \textbf{SGC} \cite{wu2019simplifying}. \textit{Meta-learning methods} include \textbf{ProtoNet} \cite{snell2017prototypical} and \textbf{MAML} \cite{finn2017model}. \textit{Graph in-context learning methods} contain \textbf{VNT} \cite{tan2023virtual} and \textbf{GraphPrompt} \cite{liu2023graphprompt}. Since PRODIGY \cite{huang2023prodigy} does not release its pretrained weights, we do not include it as a baseline. \textit{Graph meta-learning methods} consist of \textbf{GPN} \cite{ding2020graph}, \textbf{G-Meta} \cite{huang2020graph}, \textbf{TENT} \cite{wang2022task}, \textbf{Meta-GPS} \cite{liu2022few}, \textbf{TEG} \cite{kim2023task}, \textbf{COSMIC} \cite{wang2023contrastive}, \textbf{TLP} \cite{tan2022transductive}, \textbf{Meta-BP} \cite{zhang2025unlocking}, and \textbf{SMILE} \cite{liu2025dual}. The descriptions of these baselines are presented in \textbf{Appendix} \ref{Baseline_Descriptions}.

\subsection{Implementation Details}
We implement the proposed context-aware architecture with a hidden dimension of 256, four attention heads, two Transformer layers, and a feed-forward network dimension of 512. To enhance robustness, we employ a multi-head readout strategy with three ensemble heads and set the maximum number of sampled neighbors $k_{\text{neigh}}=30$. The model is optimized using the AdamW optimizer \cite{DBLP:conf/iclr/LoshchilovH19} with an initial learning rate of 2e-4, a weight decay of 1e-4, and a cosine annealing scheduler. To prevent overfitting, we incorporate input noise injection with a standard deviation of $\sigma=0.02$. Pseudo-tasks are constructed by sampling from a candidate pool of 4,096 nodes. To ensure reproducibility and fair comparison, we use a fixed random seed for both our model and all baselines. For datasets with a large number of classes, such as CoraFull and ogbn-arxiv, we adopt 5-way 3-shot, 5-way 5-shot, 10-way 3-shot, and 10-way 5-shot few-shot settings. For datasets with a moderate number of classes, such as Coauthor-CS, we consider 2-way 3-shot, 2-way 5-shot, 5-way 3-shot, and 5-way 5-shot settings. For the remaining datasets with fewer classes, we consistently use 2-way 1-shot, 2-way 3-shot, and 2-way 5-shot experimental settings. We report the average results of 5 independent experiments (100 episodes each) carried out on a single NVIDIA 3090 GPU.

%% file: result.tex
\section{Results}
\subsection{Model Performance}

\begin{table*}[ht]
  \caption{Accuracies (\%) on CoraFull, Cora, and Coauthor-CS datasets. Bold: best, Underline: runner-up.}
  \label{results-merged-1}
  \centering
  \resizebox{0.96\textwidth}{!}{%
  \begin{tabular}{l|cccc|ccc|cccc}
    \toprule
    \multirow{2}{*}{\textbf{Model}} & \multicolumn{4}{c}{\textbf{CoraFull}} & \multicolumn{3}{c}{\textbf{Cora}} & \multicolumn{4}{c}{\textbf{Coauthor-CS}} \\
    \cmidrule(lr){2-5} \cmidrule(lr){6-8} \cmidrule(lr){9-12}
    & 5 way 3 shot & 5 way 5 shot & 10 way 3 shot & 10 way 5 shot & 2 way 1 shot & 2 way 3 shot & 2 way 5 shot & 2 way 3 shot & 2 way 5 shot & 5 way 3 shot & 5 way 5 shot \\
    \midrule
    DeepWalk    & 23.62$\pm$3.99 & 25.93$\pm$3.45 & 15.32$\pm$4.12 & 17.03$\pm$3.73 & 32.95$\pm$2.70 & 36.70$\pm$2.99 & 41.51$\pm$2.70 & 59.52$\pm$2.72 & 63.12$\pm$3.12 & 33.76$\pm$3.21 & 40.15$\pm$2.96 \\
    node2vec    & 23.75$\pm$2.93 & 25.42$\pm$3.61 & 13.90$\pm$3.32 & 15.21$\pm$2.64 & 31.17$\pm$3.16 & 35.66$\pm$2.79 & 40.69$\pm$2.90 & 56.16$\pm$4.19 & 60.22$\pm$4.06 & 30.35$\pm$3.93 & 39.16$\pm$3.79 \\
    GCN         & 34.65$\pm$2.76 & 39.83$\pm$2.49 & 29.23$\pm$3.25 & 34.14$\pm$2.15 & 55.46$\pm$2.16 & 69.96$\pm$2.52 & 67.95$\pm$2.36 & 73.52$\pm$1.97 & 77.20$\pm$3.01 & 52.19$\pm$2.31 & 56.35$\pm$2.99 \\
    SGC         & 39.56$\pm$3.52 & 44.53$\pm$2.92 & 35.12$\pm$2.71 & 39.53$\pm$3.32 & 56.75$\pm$2.31 & 70.15$\pm$1.99 & 70.67$\pm$2.11 & 75.49$\pm$2.15 & 79.63$\pm$2.01 & 56.39$\pm$2.26 & 59.25$\pm$2.16 \\
    \midrule
    ProtoNet    & 33.67$\pm$2.51 & 36.53$\pm$3.76 & 24.90$\pm$2.03 & 27.24$\pm$2.67 & 50.39$\pm$2.52 & 52.67$\pm$2.28 & 57.92$\pm$2.34 & 71.18$\pm$3.82 & 75.51$\pm$3.19 & 47.71$\pm$3.92 & 51.66$\pm$2.51 \\
    MAML        & 37.12$\pm$3.16 & 47.51$\pm$3.09 & 26.61$\pm$2.19 & 31.60$\pm$2.91 & 52.40$\pm$2.29 & 55.07$\pm$2.36 & 57.39$\pm$2.23 & 62.32$\pm$4.60 & 65.20$\pm$4.20 & 36.99$\pm$4.32 & 42.12$\pm$2.43 \\
    \midrule
    VNT  & 55.19$\pm$2.59 &
  70.16$\pm$1.36 &
  45.76$\pm$1.72 &
  57.92$\pm$2.26  & 59.39$\pm$2.22 & 75.32$\pm$1.96 & 79.39$\pm$2.49 &   86.12$\pm$3.20 &
  89.95$\pm$3.29 &
  80.16$\pm$2.55 &
  82.92$\pm$2.36 \\
    GraphPrompt‌ & 65.12$\pm$2.05 & 70.25$\pm$2.26 & 59.32$\pm$2.35 & 62.19$\pm$2.09 & 61.55$\pm$2.42 & 77.17$\pm$1.50 & \underline{85.22$\pm$1.99} &  89.22$\pm$2.95 & 91.39$\pm$3.22 & 81.63$\pm$2.51 & 84.35$\pm$2.90\\
    \midrule
    Meta-GNN    & 52.23$\pm$2.41 & 59.12$\pm$2.36 & 47.21$\pm$3.06 & 53.32$\pm$3.15 & 58.82$\pm$2.56 & 70.40$\pm$2.64 & 72.51$\pm$1.91 & 85.76$\pm$2.74 & 87.86$\pm$4.79 & 75.87$\pm$3.88 & 68.59$\pm$2.59 \\
    GPN         & 53.24$\pm$2.33 & 60.31$\pm$2.19 & 50.93$\pm$2.30 & 56.21$\pm$2.09 & 60.12$\pm$2.12 & 74.05$\pm$1.96 & 76.39$\pm$2.33 & 85.60$\pm$2.15 & 88.70$\pm$2.21 & 75.88$\pm$2.75 & 81.79$\pm$3.18 \\
    G-Meta      & 57.52$\pm$3.91 & 62.43$\pm$3.11 & 53.92$\pm$2.91 & 58.10$\pm$3.02 & 59.72$\pm$3.15 & 74.39$\pm$2.69 & 80.05$\pm$1.98 & 92.14$\pm$3.90 & \underline{93.90$\pm$3.18} & 75.72$\pm$3.59 & 74.18$\pm$3.29 \\
    TENT        & 64.80$\pm$4.10 & 69.24$\pm$4.49 & 51.73$\pm$4.34 & 56.00$\pm$3.53 & 55.39$\pm$2.16 & 58.25$\pm$2.23 & 66.75$\pm$2.19 & 89.35$\pm$4.49 & 90.90$\pm$4.24 & 78.38$\pm$5.21 & 78.56$\pm$4.42 \\
    Meta-GPS    & 65.19$\pm$2.35 & 69.25$\pm$2.52 & 61.23$\pm$3.11 & 64.22$\pm$2.66 & 62.19$\pm$2.12 & 80.29$\pm$2.15 & 83.79$\pm$2.10 & 90.16$\pm$2.72 & 92.39$\pm$1.66 & 81.39$\pm$2.35 & 83.66$\pm$1.79 \\
    X-FNC       & 69.32$\pm$3.10 & 71.26$\pm$4.19 & 49.63$\pm$4.45 & 53.00$\pm$3.93 & 61.47$\pm$2.99 & 78.19$\pm$3.25 & 82.70$\pm$3.19 & 90.95$\pm$4.29 & 92.03$\pm$4.14 & \underline{82.93$\pm$2.02} & 84.36$\pm$3.49 \\
    TEG         & 72.14$\pm$1.06 & 76.20$\pm$1.39 & 61.03$\pm$1.13 & 65.56$\pm$1.03 & 62.52$\pm$2.95 & \underline{80.65$\pm$1.53} & 84.50$\pm$2.01 & \underline{92.36$\pm$1.59} & 93.02$\pm$1.24 & 80.78$\pm$1.40 & 84.70$\pm$1.42 \\
    COSMIC      & 73.03$\pm$1.78 & 77.24$\pm$1.52 & \underline{65.79$\pm$1.36} & \underline{70.06$\pm$1.93} & 63.16$\pm$2.47 & 65.37$\pm$2.49 & 69.10$\pm$2.30 & 89.35$\pm$4.49 & 93.32$\pm$1.93 & 78.38$\pm$5.21 & \underline{85.47$\pm$1.42} \\
    TLP         & 66.32$\pm$2.10 & 71.36$\pm$4.49 & 51.73$\pm$4.34 & 56.00$\pm$3.53 & 60.19$\pm$2.25 & 71.10$\pm$1.66 & 85.15$\pm$2.19 & 90.35$\pm$4.49 & 90.90$\pm$4.24 & 82.30$\pm$2.05 & 78.56$\pm$4.42 \\
    Meta-BP     & 72.90$\pm$1.90 & 74.36$\pm$2.19 & 62.35$\pm$2.27 & 67.26$\pm$2.59 & 66.42$\pm$4.12 & 76.32$\pm$4.30 & 83.12$\pm$4.16 & 91.19$\pm$2.21 & 92.32$\pm$2.11 & 81.35$\pm$2.02 & 82.12$\pm$2.15 \\
    SMILE                           & \underline{74.04$\pm$1.88}              & \underline{77.30$\pm$1.61}              & 61.79$\pm$1.46              & 65.70$\pm$1.08              & \underline{71.90$\pm$3.54}              & 79.22$\pm$1.26 & 82.29$\pm$1.42              & 89.45$\pm$1.86              & 92.19$\pm$2.14 & 76.18$\pm$2.21 & 83.56$\pm$2.42              \\
    \midrule
    \textbf{VISION} & \textbf{80.23$\pm$1.80}    & \textbf{83.02$\pm$1.48}    & \textbf{70.73$\pm$1.32}    & \textbf{73.64$\pm$1.10}    & \textbf{76.11$\pm$2.80}    & \textbf{82.72$\pm$1.88}    & \textbf{86.02$\pm$1.63}    & \textbf{94.14$\pm$1.39}    & \textbf{95.34$\pm$1.22}    & \textbf{93.20$\pm$0.74}    & \textbf{94.56$\pm$0.64} \\
    \bottomrule
  \end{tabular}%
  }
\end{table*}

\begin{table*}[ht]
  \caption{Accuracies (\%) on CiteSeer, ogbn-arxiv, and Cora-ML datasets. Bold: best, Underline: runner-up.}
  \label{results-merged-2}
  \centering
  \resizebox{0.96\textwidth}{!}{%
  \begin{tabular}{l|ccc|cccc|ccc}
    \toprule
    \multirow{2}{*}{\textbf{Model}} & \multicolumn{3}{c}{\textbf{CiteSeer}} & \multicolumn{4}{c}{\textbf{ogbn-arxiv}} & \multicolumn{3}{c}{\textbf{Cora-ML}} \\
    \cmidrule(lr){2-4} \cmidrule(lr){5-8} \cmidrule(lr){9-11}
    & 2 way 1 shot & 2 way 3 shot & 2 way 5 shot & 5 way 3 shot & 5 way 5 shot & 10 way 3 shot & 10 way 5 shot & 2 way 1 shot & 2 way 3 shot & 2 way 5 shot \\
    \midrule
    DeepWalk    & 39.56$\pm$2.79 & 39.72$\pm$3.42 & 43.22$\pm$3.19 & 24.12$\pm$3.16 & 26.16$\pm$2.95 & 20.19$\pm$2.35 & 23.76$\pm$3.02 & 38.15$\pm$2.92 & 40.32$\pm$3.10 & 42.19$\pm$2.55 \\
    node2vec    & 40.12$\pm$3.15 & 42.39$\pm$2.79 & 47.20$\pm$2.92 & 25.29$\pm$2.96 & 27.39$\pm$2.56 & 22.99$\pm$3.15 & 25.95$\pm$3.12 & 41.12$\pm$3.11 & 43.19$\pm$2.17 & 45.16$\pm$2.62 \\
    GCN         & 51.95$\pm$2.45 & 53.79$\pm$2.39 & 55.76$\pm$2.56 & 32.26$\pm$2.11 & 36.29$\pm$2.39 & 30.21$\pm$1.95 & 33.96$\pm$1.59 & 62.12$\pm$2.35 & 67.45$\pm$2.11 & 69.22$\pm$2.01 \\
    SGC         & 53.72$\pm$2.55 & 55.12$\pm$2.59 & 57.25$\pm$2.79 & 35.19$\pm$2.76 & 39.76$\pm$2.95 & 31.99$\pm$2.12 & 35.22$\pm$2.52 & 61.39$\pm$2.59 & 69.51$\pm$3.10 & 70.29$\pm$1.95 \\
    \midrule
    ProtoNet    & 49.15$\pm$2.29 & 52.19$\pm$2.96 & 53.75$\pm$2.49 & 39.99$\pm$3.28 & 47.31$\pm$1.71 & 32.79$\pm$2.22 & 37.19$\pm$1.92 & 63.12$\pm$2.12 & 68.40$\pm$2.60 & 76.94$\pm$2.39 \\
    MAML        & 49.15$\pm$2.25 & 52.75$\pm$2.75 & 54.36$\pm$2.39 & 28.35$\pm$1.49 & 29.09$\pm$1.62 & 30.19$\pm$2.97 & 36.19$\pm$2.29 & 62.19$\pm$3.65 & 68.61$\pm$2.51 & 71.27$\pm$2.62 \\
    \midrule
    VNT & 59.72$\pm$2.25 & 69.25$\pm$2.16 & 73.90$\pm$2.10 & OOM & OOM & OOM & OOM & 74.59$\pm$2.12 &86.22$\pm$2.45 & 89.26$\pm$1.76 \\
    GraphPrompt‌ & 57.59$\pm$2.12  & 67.49$\pm$2.11 & 70.59$\pm$2.09 & 51.52$\pm$2.55 &62.25$\pm$2.66 & 40.19$\pm$2.02 & 45.39$\pm$2.12 & 75.12$\pm$1.92 &87.52$\pm$2.59 & 90.32$\pm$2.12
    \\ \midrule
    Meta-GNN    & 55.45$\pm$2.15 & 59.71$\pm$2.79 & 61.32$\pm$3.22 & 40.14$\pm$1.94 & 45.52$\pm$1.71 & 35.19$\pm$1.72 & 39.02$\pm$1.99 & 65.15$\pm$2.22 & 70.21$\pm$2.71 & 74.34$\pm$2.41 \\
    GPN         & 57.36$\pm$2.20 & 64.22$\pm$2.92 & 65.59$\pm$2.49 & 42.81$\pm$2.34 & 50.50$\pm$2.13 & 37.36$\pm$1.99 & 42.16$\pm$2.19 & 73.19$\pm$2.10 & 80.70$\pm$2.41 & 83.21$\pm$2.15 \\
    G-Meta      & 54.39$\pm$2.19 & 57.59$\pm$2.42 & 62.49$\pm$2.30 & 40.48$\pm$1.70 & 47.16$\pm$1.73 & 35.49$\pm$2.12 & 40.95$\pm$2.70 & 78.19$\pm$2.65 & \underline{88.68$\pm$1.68} & 92.16$\pm$1.14 \\
    TENT        & 60.03$\pm$3.11 & 65.20$\pm$3.19 & 67.59$\pm$2.95 & 50.26$\pm$1.73 & 61.38$\pm$1.72 & 42.19$\pm$1.16 & 46.29$\pm$1.29 & 51.92$\pm$2.35 & 55.65$\pm$2.19 & 58.30$\pm$2.05 \\
    Meta-GPS    & 58.95$\pm$2.12 & 69.95$\pm$2.02 & 72.56$\pm$2.06 & 52.16$\pm$2.01 & 62.55$\pm$1.95 & 42.96$\pm$2.02 & 46.86$\pm$2.10 & \underline{79.12$\pm$2.66} & 87.91$\pm$2.12 & \underline{91.66$\pm$2.52} \\
    X-FNC       & 58.79$\pm$2.56 & 67.96$\pm$3.10 & 70.29$\pm$3.05 & 52.36$\pm$2.75 & 63.19$\pm$2.22 & 41.92$\pm$2.72 & 46.10$\pm$2.16 & 75.22$\pm$2.59 & 85.62$\pm$3.12 & 90.36$\pm$2.99 \\
    TEG         & 59.70$\pm$2.69 & \underline{73.79$\pm$1.59} & \underline{76.79$\pm$2.12} & \underline{57.35$\pm$1.14} & 62.07$\pm$1.72 & \underline{47.41$\pm$0.63} & \underline{51.11$\pm$0.73} & 61.35$\pm$2.16 & 67.90$\pm$2.10 & 71.10$\pm$2.02 \\
    COSMIC      & 60.95$\pm$2.75 & 70.22$\pm$2.56 & 75.10$\pm$2.30 & 52.98$\pm$2.19 & \underline{65.42$\pm$1.69} & 43.19$\pm$2.72 & 47.59$\pm$2.19 & 62.19$\pm$2.52 & 66.02$\pm$2.29 & 72.10$\pm$2.25 \\
    TLP         & \underline{61.12$\pm$2.10} & 71.10$\pm$2.17 & 75.55$\pm$2.03 & 41.96$\pm$2.29 & 52.99$\pm$2.05 & 39.42$\pm$2.15 & 42.62$\pm$2.09 & 73.19$\pm$2.16 & 85.32$\pm$2.12 & 58.30$\pm$2.05 \\
    Meta-BP     & 60.15$\pm$2.45 & 72.19$\pm$3.19 & 76.11$\pm$3.29 & 55.12$\pm$4.12 & 65.39$\pm$4.55 & 46.25$\pm$4.52 & 50.12$\pm$3.39 & 67.12$\pm$2.39 & 79.15$\pm$2.56 & 85.19$\pm$2.29 \\
    SMILE                                                & 61.05$\pm$3.46              & 72.22$\pm$1.92 & 75.59$\pm$2.49                                   & 55.22$\pm$1.53 & 64.25$\pm$2.12 & 47.36$\pm$1.99 & 50.16$\pm$2.19                                   & 78.85$\pm$2.80              & 85.72$\pm$2.42 & 89.22$\pm$2.11              
    \\
    \midrule
    \textbf{VISION} & \textbf{64.27$\pm$2.99}    & \textbf{74.09$\pm$2.17}    & \multicolumn{1}{c|}{\textbf{77.78$\pm$2.00}}    & \textbf{62.61$\pm$1.97}    & \textbf{66.62$\pm$1.72}    & \textbf{49.61$\pm$1.14}    & \multicolumn{1}{c|}{\textbf{52.79$\pm$1.00}}    & \textbf{85.44$\pm$2.35}    & \textbf{91.60$\pm$1.27}    & \textbf{93.71$\pm$1.03} \\
    \bottomrule
  \end{tabular}%
  }
\end{table*}

We conduct extensive experiments on six benchmark datasets. As shown in Tables \ref{results-merged-1} and \ref{results-merged-2}, our proposed VISION framework achieves consistently superior performance, establishing new state-of-the-art results across all experimental configurations, demonstrating its effectiveness and robustness for few-shot node classification tasks. We attribute this superior performance to two core designs. First, the context-aware network creates deeply task-aware node representations by capturing local topology and then performing global in-context reasoning over the entire task as a sequence. Second, the unsupervised meta-training strategy enhances generalization by constructing high-quality pseudo-tasks from vast unlabeled data, which overcomes the reliance on base labeled data commonly used in many existing graph FSL methods.

Moreover, the results clearly show a general trend where graph in-context learning and graph meta-learning models, as two graph learning paradigms, exhibit superior performance over other types of baselines. This is expected, as the former can effectively leverage rich contextual knowledge, whereas the latter is specifically designed to address the unique challenges of graph FSL. In contrast, traditional graph embedding methods achieve inferior performance due to their lack of mechanisms to handle label scarcity, while general-purpose meta-learning models are limited by their inability to exploit the inherent structural information of graphs.

\begin{table*}[ht]
\centering
\caption{Results of different model variants on all datasets.}
\label{Ablation}
\footnotesize
\begin{tabular}{l|cccccc}
\toprule
\multirow{2}{*}{\textbf{Model}} & \textbf{CoraFull} & \textbf{Cora} & \textbf{Coauthor-CS} & \textbf{CiteSeer} & \textbf{ogbn-arxiv} & \textbf{Cora-ML} \\
\cmidrule{2-7}  
& 5 way 5 shot & 2 way 5 shot & 5 way 5 shot & 2 way 3 shot & 5 way 5 shot & 2 way 1 shot \\
\midrule
\textit{w/o both}         & 72.17$\pm$1.78          & 71.85$\pm$1.81          & 83.34$\pm$1.07          & 65.37$\pm$2.65          & 47.91$\pm$1.80          & 66.05$\pm$2.71          \\
\textit{w/o local}        & 72.65$\pm$1.81          & 66.57$\pm$2.08          & 81.84$\pm$1.07          & 59.02$\pm$2.16          & 47.79$\pm$1.75          & 55.83$\pm$2.16          \\
\textit{w/o global}       & 79.55$\pm$1.63          & 82.22$\pm$1.72          & 89.85$\pm$0.84          & 70.10$\pm$2.60          & 53.42$\pm$1.86          & 65.31$\pm$2.84          \\
\textit{w/o task context} & 79.94$\pm$1.75          & 85.83$\pm$1.58          & 91.24$\pm$0.79          & 64.41$\pm$2.57          & 54.22$\pm$1.81          & 67.93$\pm$3.00          \\
Ours                      & \textbf{83.02$\pm$1.48} & \textbf{86.02$\pm$1.63} & \textbf{94.56$\pm$0.64} & \textbf{74.09$\pm$2.17} & \textbf{66.62$\pm$1.72} & \textbf{85.44$\pm$2.35} \\
\bottomrule
\end{tabular}
\end{table*}

\subsection{Ablation Study}
\label{sec:ablation}
To assess the effectiveness of our proposed context-aware architecture, we design four variants of our VISION model for comparison under various few-shot settings.
(I) \textit{w/o task context}: This variant removes the task-contextualized initialization, relying solely on the feature encoder without explicit support/query role differentiation.
(II) \textit{w/o global}: This variant removes the global in-context reasoning module, preventing the model from explicitly attending to the support set for task-level dependencies.
(III) \textit{w/o local}: This variant discards the local neighborhood aggregation module, feeding the node features directly into the subsequent modules, thereby ignoring local graph topology.
(IV) \textit{w/o both}: This variant omits both the local aggregation and global context modules, effectively degenerating into a standard prototype-based metric learning framework.
We present a summary of ablation results in Table~\ref{Ablation}, with comprehensive results provided in \textbf{Appendix}~\ref{sec:appendix_ablation}.

Based on the results, we have the following in-depth analysis.
(I) Our full VISION model demonstrates superior performance across all settings.
This consistent outperformance against all variants serves as strong evidence for the effectiveness of our holistic design.
(II) The \textit{w/o both} variant generally establishes a performance lower bound across most datasets, which starkly highlights the necessity of our proposed interaction modules.
(III) The performance gaps observed in \textit{w/o task context}, \textit{w/o global}, and \textit{w/o local} further validate our architecture.
The results indicate that neither local structural information nor global task context alone is sufficient.
Specifically, the local aggregation captures topological dependencies, while the task-contextualized initialization and global reasoning ensure our model explicitly understands the task structure.
Removing any of these components leads to suboptimal performance. 

\begin{figure*}[htbp]
    \centering
    \includegraphics[width=0.85\textwidth]{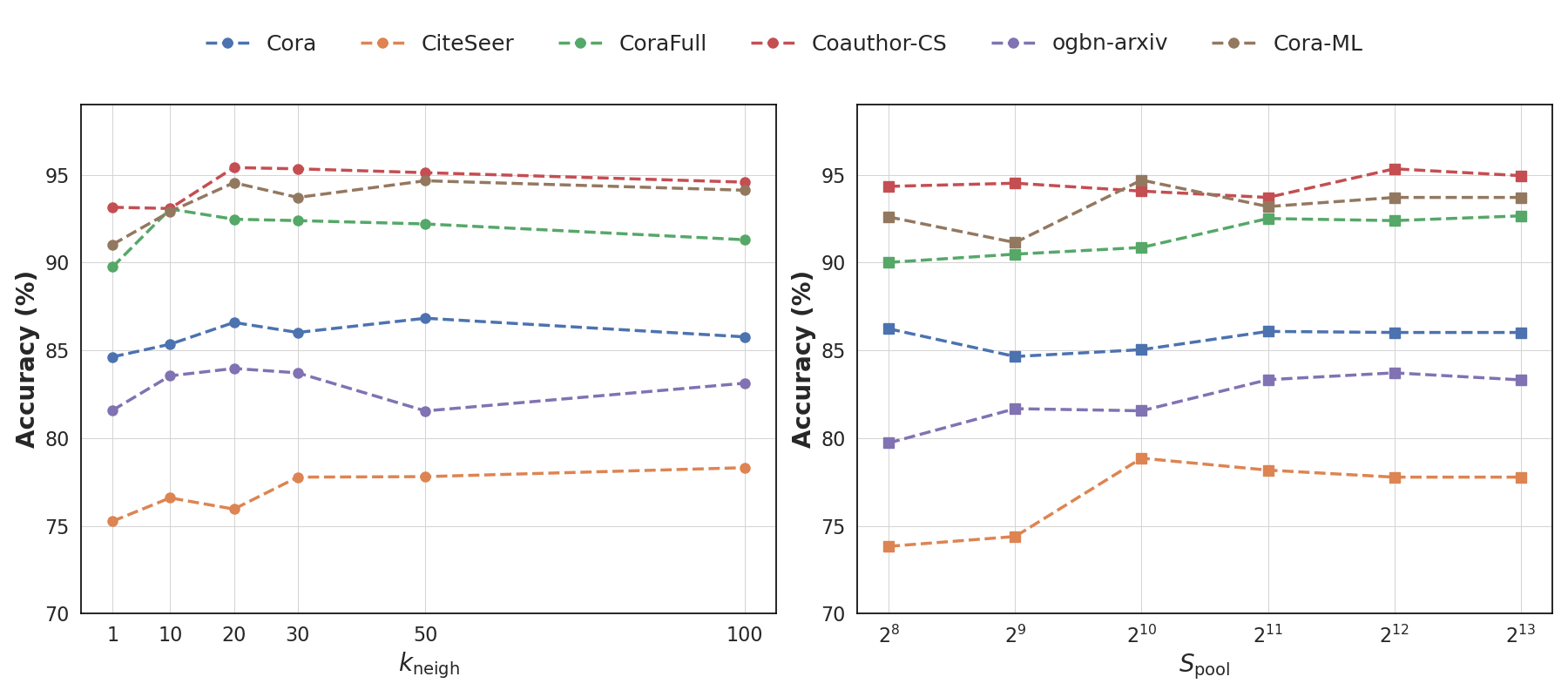}
    \caption{Sensitivity analysis of model performance. The left panel shows the impact of the number of sampled neighbors ($k_{\text{neigh}}$), and the right panel shows the impact of the candidate pool size ($S_{\text{pool}}$).}
    \label{fig:hyperparameter}
\end{figure*}

\subsection{Hyperparameter Sensitivity Analysis}
We analyze the impact of two key hyperparameters---the number of neighbors $k_{\text{neigh}}$ and the candidate pool size $S_{\text{pool}}$---on model performance under the 2-way 5-shot setting, as shown in Fig. \ref{fig:hyperparameter}.
The model is slightly sensitive to $k_{\text{neigh}}$, with performance generally exhibiting a rise-then-saturate trend: too few neighbors fail to capture sufficient local context, while too many can introduce noise and unnecessary computational overhead.
In contrast, the model shows little sensitivity to the candidate pool size $S_{\text{pool}}$, with performance remaining relatively stable across the tested range.
A plausible explanation is that even a moderately sized pool provides sufficient diversity for generating high-quality pseudo-tasks.
Based on this analysis, we select $k_{\text{neigh}}=30$ to balance information capture with efficiency.
For $S_{\text{pool}}$, given that performance does not degrade with a larger pool, we use $S_{\text{pool}}=4,096$ as a robust default setting to ensure sufficient candidate diversity.

%% file: conclusion.tex
\section{Conclusion}
\label{sec:conclusion}

In this work, we propose VISION, a novel framework that pioneers a new paradigm for graph FSL centered on context-aware inference. Specifically, our model incorporates two key techniques. First, we introduce a context-aware architecture that enables the model to better leverage contextual information. It accomplishes classification in a single forward pass without any fine-tuning by framing the task as an in-context learning problem, where it first captures local graph structure and then globally reasons over the entire task as a sequence to dynamically generate query representations. Second, a fully unsupervised meta-training strategy is designed to leverage vast unlabeled data. By generating structure-adaptive features and constructing diverse pseudo-tasks based on similarity, the model learns a generalizable classification ability without reliance on base class labels. Theoretically, our model provides a generalization error bound that supports the advantages of our context-aware design. Empirically, VISION demonstrates state-of-the-art performance across six benchmark datasets, surpassing many competitive baselines. 

%% file: appendix.tex
\section{Appendix}

\subsection{Verification of Random Anchor Selection}
\label{sec:anchor_verification}

A core assumption of our pseudo-task generation is that randomly selected anchors can represent diverse semantics to form discriminative classification tasks.
To mathematically validate this assumption, we adopt the statistical model proposed in previous unsupervised meta-learning work~\cite{siavash2019unsupervised}.
Let $C_{\text{total}}$ be the total number of latent classes in the dataset, and $N$ be the number of anchors sampled (consistent with the $N$-way setting).
Assuming the dataset contains a sufficiently large number of samples per class, the probability $P$ that all $N$ randomly sampled anchors belong to distinct classes is given by the ratio of permutations to total combinations:
\begin{equation}
\label{eq:prob_exact}
P = \frac{P(C_{\text{total}}, N)}{(C_{\text{total}})^N} = \frac{C_{\text{total}} \times (C_{\text{total}}-1) \times \dots \times (C_{\text{total}}-N+1)}{(C_{\text{total}})^N}.
\end{equation}
Although the condition $C_{\text{total}} \gg N$ assumed in prior works does not strictly hold for all our datasets, we demonstrate through exact calculation that the probability of selecting distinct classes remains sufficiently high to support effective meta-learning.
For the Cora dataset ($C_{\text{total}}=7$, $N=2$), the exact calculation is:
\begin{equation}
P_{\text{Cora}} = \frac{7 \times 6}{7^2} = \frac{42}{49} \approx 85.71\%.
\end{equation}
For the ogbn-arxiv dataset ($C_{\text{total}}=40$, $N=5$), the probability is:
\begin{equation}
P_{\text{arxiv}} = \frac{40 \times 39 \times 38 \times 37 \times 36}{40^5} = \frac{78,960,960}{102,400,000} \approx 77.11\%.
\end{equation}
Similarly, for the CoraFull dataset ($C_{\text{total}}=70$, $N=5$), we have:
\begin{equation}
P_{\text{Full}} = \frac{70 \times 69 \times 68 \times 67 \times 66}{70^5} = \frac{1,452,361,680}{1,680,700,000} \approx 86.41\%.
\end{equation}
These mathematical results confirm that simple random sampling consistently yields a high degree of class diversity. Even in cases where collisions occur, they serve as a form of label noise that can enhance model robustness.

\begin{table}[ht]
\centering
\caption{Statistical verification of the diversity of randomly selected anchors.}
\label{tab:anchor_stats}
\resizebox{\columnwidth}{!}{%
\begin{tabular}{lccc}
\toprule
\textbf{Dataset} & \textbf{Total Classes} & \textbf{Task Setting ($N$-way)} & \textbf{Avg. Distinct Classes (Max=$N$)} \\
\midrule
Cora        & 7   & 2 & 1.63 \\
CiteSeer    & 6   & 2 & 1.50 \\
Cora-ML     & 7   & 2 & 1.66 \\
CoraFull    & 70  & 5 & 4.64 \\
ogbn-arxiv  & 40  & 5 & 4.19 \\
Coauthor-CS & 15  & 5 & 3.25 \\
\bottomrule
\end{tabular}%
}
\end{table}

To further verify this empirically, we conduct a statistical analysis on the class composition of randomly selected anchors across all datasets.
We simulate the anchor selection process for 10,000 episodes under the standard $N$-way setting used in our experiments.
Table \ref{tab:anchor_stats} reports the Average Distinct Classes metric, which quantifies the average number of unique true classes covered by the $N$ randomly sampled anchors.
As shown in Table \ref{tab:anchor_stats}, the results demonstrate that random sampling consistently yields a high degree of class diversity, ensuring that the constructed pseudo-tasks are sufficiently discriminative for training.

\subsection{Theoretical Proofs}
\label{sec:appendix_proofs}

In this section, we provide the detailed proofs for the theoretical results presented in Section \ref{sec:theory}. We adopt the standard learning theory framework for domain adaptation and meta-learning.

\subsubsection{Proof of Theorem \ref{thm:main}}

\begin{proof}
Let $\mathcal{L}(f, \mathcal{T})$ denote the loss of model $f$ on a task $\mathcal{T}$. The generalization error on the real task distribution is $\epsilon_{\text{gen}}(f) = \mathbb{E}_{\mathcal{T} \sim P_{\text{real}}}[\mathcal{L}(f, \mathcal{T})]$.
The empirical risk on pseudo-tasks is $\hat{\epsilon}_\text{pseudo}(f) = \frac{1}{T} \sum_{i=1}^T \mathcal{L}(f, \mathcal{T}_i^\text{pseudo})$.

We decompose the error using the triangular inequality, introducing the expected risk on the pseudo-distribution $\epsilon_{\text{pseudo}}(f)$:
\begin{equation}
\epsilon_{\text{gen}}(\hat{f}) \leq \epsilon_{\text{pseudo}}(\hat{f}) + |\epsilon_{\text{gen}}(\hat{f}) - \epsilon_{\text{pseudo}}(\hat{f})|.
\end{equation}

\noindent \textbf{Step 1: Bounding the Meta-Learning Error (Term A).} According to standard statistical learning theory \cite{bartlett2002rademacher}, the gap between empirical and expected risk on the source distribution (pseudo-tasks) is bounded by the Rademacher complexity of the hypothesis class $\mathcal{H}$:
\begin{equation}
\epsilon_{\text{pseudo}}(\hat{f}) \leq \hat{\epsilon}_\text{pseudo}(\hat{f}) + 2\mathcal{R}_T(\mathcal{H}) + \sqrt{\frac{\ln(1/\delta)}{2T}}.
\end{equation}
This corresponds to Term (A) in Eq. \ref{eq:main_bound}, representing the estimation error due to finite training tasks.

\noindent \textbf{Step 2: Bounding the Distribution Shift (Term B).} The term $|\epsilon_{\text{gen}}(\hat{f}) - \epsilon_{\text{pseudo}}(\hat{f})|$ represents the discrepancy between training on pseudo-tasks and testing on real tasks.
Since the loss function $\mathcal{L}$ is assumed to be $L$-Lipschitz continuous with respect to the task representations (which is satisfied by our use of bounded activation functions like Tanh/GELU and LayerNorm), we can invoke the Kantorovich-Rubinstein duality of the Wasserstein-1 distance $W_1$:
\begin{equation}
|\mathbb{E}_{P_{\text{real}}}[\mathcal{L}(f, \cdot)] - \mathbb{E}_{P_{\text{pseudo}}}[\mathcal{L}(f, \cdot)]| \leq L \cdot W_1(P_{\text{real}}^{\phi}, P_{\text{pseudo}}^{\phi}).
\end{equation}
Here, $P^{\phi}$ denotes the distribution of task features transformed by our feature extractor and task-level normalization (TLN).
In our method, TLN centers the features of each task: $\mathbf{x}' = \mathbf{x} - \mu_{\mathcal{T}}$. This operation aligns the first-order statistics of arbitrary graph tasks to a common origin, considerably reducing the Wasserstein distance $W_1$ compared to raw features. This theoretically justifies the necessity of the TLN module in our code.

Combining Step 1 and Step 2 yields the final bound in Theorem \ref{thm:main}.
\end{proof}

\subsubsection{Proof of Corollary \ref{coro:icl}}

\begin{proof}
We analyze the variance of the node representation to demonstrate the benefit of the Global Task Attention module.

Let $\mathbf{h}_q$ be the representation of a query node.
In a model without global context (only local aggregation), $\mathbf{h}_q^\text{local}$ depends only on the node's neighborhood $\mathcal{N}(q)$. The variance of this estimator, conditioned on the true class mean $\mu_c$, is denoted as $\text{Var}_\text{local} = \mathbb{E}[||\mathbf{h}_q^\text{local} - \mu_c||^2]$.

In VISION, the dual-context fusion module updates the local representation by attending to the support set $\mathcal{S}_c = \{\mathbf{s}_1, ..., \mathbf{s}_K\}$ of class $c$. Assuming uniform attention for simplicity, the global context vector is computed as follows:
\begin{equation}
\mathbf{h}_q^{global} = \sum_{k=1}^K \alpha_k \mathbf{s}_k \approx \frac{1}{K} \sum_{k=1}^K \mathbf{s}_k.
\end{equation}
Assuming the support samples are independent observations of the class prototype with noise variance $\sigma^2$ (\textit{i.e.}, $\mathbf{s}_k \sim \mathcal{N}(\mu_c, \sigma^2\mathbf{I})$), the variance of the averaged global context is:
\begin{equation}
\text{Var}(\mathbf{h}_q^{global}) \approx \frac{\sigma^2}{K}.
\end{equation}

The final representation is a fusion: $\mathbf{z}_q = (1-g)\mathbf{h}_q^\text{local} + g\mathbf{h}_q^\text{global}$.
Assuming partial independence between local structural noise and global semantic noise, the variance of the fused estimator is dominated by the term with the lower variance.
Since $\frac{\sigma^2}{K}$ decreases linearly with the shot number $K$, the global attention mechanism effectively acts as a variance reduction estimator.
Specifically, compared to $\text{Var}_\text{local}$, the fused variance is reduced by a term proportional to the information gain from the $K$ support shots. This proves that the in-context learning mechanism strictly reduces predictive uncertainty compared to local-only baselines.
\end{proof}

\begin{table*}[htbp]
\centering
\caption{Ablation study results (Part 1): CoraFull, Coauthor-CS, and Cora. The best result is highlighted in bold.}
\label{tab:ablation_part1_equal}
\resizebox{0.9\textwidth}{!}{%
\begin{tabular}{c|ccc|ccc|cc}
\toprule
\multirow{2}{*}{\textbf{Model}} & \multicolumn{3}{c|}{\textbf{CoraFull}} & \multicolumn{3}{c|}{\textbf{Coauthor-CS}} & \multicolumn{2}{c}{\textbf{Cora}} \\ \cmidrule(lr){2-9}
& 5 way 3 shot & 10 way 3 shot & 10 way 5 shot & 2 way 3 shot & 2 way 5 shot & 5 way 3 shot & 2 way 1 shot & 2 way 3 shot \\ \midrule
\textit{w/o both}         & 69.04$\pm$2.04            & 60.64$\pm$1.36            & 62.92$\pm$1.29            & 83.66$\pm$2.35            & 84.96$\pm$2.10            & 78.73$\pm$1.33            & 59.13$\pm$2.75            & 76.58$\pm$2.07            \\
\textit{w/o local}        & 68.24$\pm$1.98            & 60.44$\pm$1.34            & 63.70$\pm$1.27            & 77.56$\pm$2.41            & 83.85$\pm$2.45            & 78.01$\pm$1.25            & 55.21$\pm$2.13            & 59.75$\pm$2.19            \\
\textit{w/o global}       & 75.18$\pm$1.95            & 66.37$\pm$1.29            & 70.27$\pm$1.14            & 87.61$\pm$2.15            & 89.84$\pm$2.03            & 87.36$\pm$0.99            & 60.30$\pm$2.56            & 75.96$\pm$2.04            \\
\textit{w/o task context} & 74.59$\pm$1.92            & 66.85$\pm$1.27            & 69.76$\pm$1.27            & 88.91$\pm$2.17            & 89.63$\pm$2.01            & 89.46$\pm$0.93            & 59.77$\pm$2.32            & 75.02$\pm$2.04            \\
Ours                      & \textbf{80.23$\pm$1.80}   & \textbf{70.73$\pm$1.32}   & \textbf{73.64$\pm$1.10}   & \textbf{94.14$\pm$1.39}   & \textbf{95.34$\pm$1.22}   & \textbf{93.20$\pm$0.74}   & \textbf{76.11$\pm$2.80}   & \textbf{82.72$\pm$1.88}   \\
\bottomrule
\end{tabular}
}
\end{table*}

\begin{table*}[htbp]
\centering
\caption{Ablation study results (Part 2): CiteSeer, ogbn-arxiv, and Cora-ML. The best result is highlighted in bold.}
\label{tab:ablation_part2_equal}
\resizebox{0.9\textwidth}{!}{%
\begin{tabular}{c|cc|ccc|cc}
\toprule
\multirow{2}{*}{\textbf{Model}} & \multicolumn{2}{c|}{\textbf{CiteSeer}} & \multicolumn{3}{c|}{\textbf{ogbn-arxiv}} & \multicolumn{2}{c}{\textbf{Cora-ML}} \\ \cmidrule(lr){2-8}
& 2 way 1 shot & 2 way 5 shot & 5 way 3 shot & 10 way 3 shot & 10 way 5 shot & 2 way 3 shot & 2 way 5 shot \\ \midrule
\textit{w/o both}         & 56.72$\pm$2.66            & 66.55$\pm$2.29            & 44.82$\pm$1.70            & 30.59$\pm$0.98            & 32.98$\pm$0.97            & 79.72$\pm$1.94            & 87.89$\pm$1.52            \\
\textit{w/o local}        & 52.36$\pm$2.07            & 66.65$\pm$2.33            & 43.68$\pm$1.67            & 29.77$\pm$1.00            & 33.96$\pm$0.98            & 68.62$\pm$2.12            & 84.70$\pm$1.60            \\
\textit{w/o global}       & 57.72$\pm$2.45            & 65.73$\pm$2.25            & 49.82$\pm$1.99            & 37.82$\pm$1.08            & 40.23$\pm$0.98            & 90.98$\pm$1.35            & 81.08$\pm$1.73            \\
\textit{w/o task context} & 58.89$\pm$2.87            & 66.88$\pm$2.25            & 50.32$\pm$2.04            & 35.72$\pm$1.04            & 42.03$\pm$1.02            & 89.95$\pm$1.32            & 90.31$\pm$1.30            \\
Ours                      & \textbf{64.27$\pm$2.99}   & \textbf{77.78$\pm$2.00}   & \textbf{62.61$\pm$1.97}   & \textbf{49.61$\pm$1.14}   & \textbf{52.79$\pm$1.00}   & \textbf{91.60$\pm$1.27}   & \textbf{93.71$\pm$1.03}   \\
\bottomrule
\end{tabular}
}
\end{table*}

\subsection{Dataset Descriptions}
\label{Dataset_Descriptions}
We evaluate our approach on a diverse collection of graph datasets spanning multiple domains. For each dataset, the label space is partitioned into disjoint class sets designated for meta-training, meta-validation, and meta-testing. The details are as follows:\\
\textbf{CoraFull} \cite{bojchevski2017deep}: This dataset extends the classic Cora citation network by incorporating a wider set of categories. Nodes denote academic papers, while edges capture citation relationships. We allocate 40 classes for meta-training, 15 classes for validation, and 15 classes for testing.\\
\textbf{Cora} \cite{yang2016revisiting}: This citation network represents academic papers as nodes and their citation relationships as edges. Each node is annotated with a label corresponding to the research topic of the paper. We partition the dataset into 3 classes for meta-training, 2 classes for meta-validation, and 2 classes for meta-testing.\\
\textbf{Coauthor-CS} \cite{shchur2018pitfalls}: This collaboration network represents authors as nodes, with edges denoting co-authorship relations in the computer science domain. Node labels correspond to the authors’ research specialties. For each meta-learning phase, the dataset is partitioned into 5 classes.\\
\textbf{CiteSeer} \cite{yang2016revisiting}: This dataset is a document-level citation graph, where nodes correspond to scientific publications and edges denote citation links. Each node is labeled according to the thematic domain of the publication. The dataset is divided into 2 classes for each of the meta-training, meta-validation, and meta-testing phases.\\
\textbf{ogbn-arxiv} \cite{hu2020open}: A large-scale citation graph constructed from arXiv papers in the computer science field. Each node represents a publication, and edges reflect citation dependencies. Node labels are determined by subject areas defined in the arXiv taxonomy. The dataset is divided into 20 classes for training, with 10 classes each for validation and testing.\\
\textbf{Cora-ML} \cite{bojchevski2017deep}: This dataset consists exclusively of machine learning papers. Nodes correspond to individual publications, and edges are established through citation relationships. We partition the label space into 3 classes for meta-training, 2 classes for validation, and 2 classes for testing.

\subsection{Baseline Descriptions}
\label{Baseline_Descriptions}

We evaluate VISION against a comprehensive set of baselines, categorized into four groups consistent with our experimental results:

\vspace{0.5em}
\noindent \textbf{Graph Embedding Methods}
\vspace{0.5em}

\noindent $\bullet$ \textbf{DeepWalk} \cite{perozzi2014deepwalk}: This method performs truncated random walks on graphs and, inspired by the word2vec framework, maps nodes into a low-dimensional embedding space.

\noindent $\bullet$ \textbf{node2vec} \cite{grover2016node2vec}: An extension of DeepWalk that employs a flexible biased random walk strategy (combining BFS and DFS) to explore diverse neighborhoods and capture both structural equivalence and homophily.

\noindent $\bullet$ \textbf{GCN} \cite{kipf2016semi}: It applies a first-order approximation of spectral graph convolutions to generate hidden node embeddings, which can then be used for downstream learning tasks.

\noindent $\bullet$ \textbf{SGC} \cite{wu2019simplifying}: By removing nonlinearities and collapsing weight matrices, it simplifies the standard GCN architecture, resulting in an efficient yet competitive linear model.



\vspace{0.5em}
\noindent \textbf{Meta-Learning Methods}
\vspace{0.5em}

\noindent $\bullet$ \textbf{ProtoNet} \cite{snell2017prototypical}: This approach constructs a metric space where query samples are classified by comparing them to class prototypes computed as the mean of support set embeddings.

\noindent $\bullet$ \textbf{MAML} \cite{finn2017model}: It learns a set of initial parameters such that only a few gradient updates are needed for the model to adapt to new tasks with scarce labeled data, enabling rapid task generalization.


\vspace{0.5em}
\noindent \textbf{Graph In-Context Learning Methods}
\vspace{0.5em}

\noindent $\bullet$ \textbf{VNT} \cite{tan2023virtual}: Virtual Node Tuning (VNT) injects virtual nodes as soft prompts into the embedding space of a pre-trained Graph Transformer. These prompts are tuned with few-shot support samples to modulate node embeddings for specific tasks without fine-tuning the entire encoder.

\noindent $\bullet$ \textbf{GraphPrompt} \cite{liu2023graphprompt}: This framework unifies pre-training and downstream tasks into a common template (\textit{e.g.}, subgraph similarity) and employs a learnable prompt vector to assist the downstream task in retrieving relevant knowledge from the pre-trained model.

\vspace{0.5em}
\noindent \textbf{Graph Meta-Learning Methods}
\vspace{0.5em}

\noindent $\bullet$ \textbf{Meta-GNN} \cite{zhou2019meta}: It integrates MAML with GNNs, leveraging meta-gradients to update the GNN parameters for rapid adaptation to new few-shot node classification tasks.

\noindent $\bullet$ \textbf{GPN} \cite{ding2020graph}: It extends ProtoNet by incorporating a graph encoder and an evaluator, which jointly learn node embeddings, estimate node importance, and perform prototype-based classification.

\noindent $\bullet$ \textbf{G-Meta} \cite{huang2020graph}: It constructs node-centered subgraphs to capture local structural information and leverages meta-gradients to transfer knowledge across different tasks.

\noindent $\bullet$ \textbf{TENT} \cite{wang2022task}: It proposes a task-adaptive framework consisting of node-level, class-level, and task-level components to reduce the gap between meta-training and meta-testing. 

\noindent $\bullet$ \textbf{Meta-GPS} \cite{liu2022few}: It builds upon MAML by introducing prototype-driven parameter initialization with scaling and shifting, thereby improving knowledge transfer and facilitating faster adaptation.

\noindent $\bullet$ \textbf{X-FNC} \cite{wang2023few}: A framework designed to address label scarcity by extracting meta-knowledge across tasks and leveraging pseudo-labeled nodes from the unlabeled set to enhance the support set.

\noindent $\bullet$ \textbf{TEG} \cite{kim2023task}: It develops a task-equivariant framework that employs equivariant neural networks to extract task-adaptive strategies and encode inductive biases from multiple graph tasks.

\noindent $\bullet$ \textbf{COSMIC} \cite{wang2023contrastive}: It introduces a contrastive meta-learning scheme that aligns node embeddings within each episode via a two-step optimization strategy, enhancing few-shot performance.

\noindent $\bullet$ \textbf{TLP} \cite{tan2022transductive}: It freezes the pre-trained encoder and trains a linear classifier (\textit{e.g.}, logistic regression) on the support set while utilizing the query set statistics for transductive normalization.

\noindent $\bullet$ \textbf{Meta-BP} \cite{zhang2025unlocking}: It presents a lightweight meta-learner that distills relevant knowledge from a pre-trained black-box GNN and exploits task-specific cues for rapid adaptation.

\noindent $\bullet$ \textbf{SMILE} \cite{liu2025dual}: It proposes a dual-level mixup strategy (within-task and across-task) to enrich available nodes and tasks, and explicitly leverages node degree priors to encode expressive representations.

\subsection{More Ablation Study}
\label{sec:appendix_ablation}
This section presents the comprehensive results of the ablation study across all evaluated datasets and few-shot settings.
As described in the main text, we compare our full model, VISION, against four variants (\textit{i.e.}, \textit{w/o task context}, \textit{w/o global}, \textit{w/o local}, and \textit{w/o both}) to strictly validate our architectural design.
The detailed results are presented in Tables~\ref{tab:ablation_part1_equal} and~\ref{tab:ablation_part2_equal}.